\def\eqref#1{equation~\ref{#1}}
\def\1{\bm{1}}
\def\va{{\bm{a}}}
\def\vb{{\bm{b}}}
\def\vc{{\bm{c}}}
\def\ve{{\bm{e}}}
\def\vf{{\bm{f}}}
\def\vq{{\bm{q}}}
\def\vv{{\bm{v}}}
\def\vx{{\bm{x}}}
\def\mI{{\bm{I}}}
\def\mM{{\bm{M}}}
\def\mO{{\bm{O}}}
\def\mR{{\bm{R}}}
\def\mS{{\bm{S}}}
\def\mU{{\bm{U}}}
\def\mV{{\bm{V}}}
\def\mZ{{\bm{Z}}}
\DeclareMathAlphabet{\mathsfit}{\encodingdefault}{\sfdefault}{m}{sl}
\SetMathAlphabet{\mathsfit}{bold}{\encodingdefault}{\sfdefault}{bx}{n}
\def\gN{{\mathcal{N}}}
\def\gO{{\mathcal{O}}}
\newcommand{\R}{\mathbb{R}}
\newtheorem{theorem}{Theorem}
\newtheorem{lemma}[theorem]{Lemma}
\newtheorem{corollary}{Corollary}
\title{Equivariant Graph Mechanics Networks with Constraints}
\author{Wenbing Huang\thanks{Equal contributions: Wenbing Huang and Jiaqi Han; \textsuperscript{\Letter}~Corresponding authors: Yu Rong and Fuchun Sun.}~~\textsuperscript{1}, Jiaqi Han$^\ast$\textsuperscript{2}\thanks{ This work is done when Jiaqi Han works as an intern in Tencent AI Lab.}, Yu Rong\textsuperscript{\Letter}\textsuperscript{3}, Tingyang Xu\textsuperscript{3}, Fuchun Sun\textsuperscript{\Letter}\textsuperscript{2}, Junzhou Huang\textsuperscript{4}\\
\textsuperscript{1} Institute for AI Industry Research (AIR), Tsinghua University\\
\textsuperscript{2} Beijing National Research Center for Information Science and Technology (BNRist),\\
~~~Department of Computer Science and Technology, Tsinghua University\\
\textsuperscript{3} Tencent AI Lab\\ 
\textsuperscript{4} Department of Computer Science and Engineering, University of Texas at Arlington\\ \texttt{hwenbing@126.com, hanjq21@mails.tsinghua.edu.cn, yu.rong@hotmail.com}\\
\texttt{tingyangxu@tencent.com, fcsun@mail.tsinghua.edu.cn, jzhuang@uta.edu} \\
}
\begin{document}

\maketitle

\begin{abstract}
Learning to reason about relations and dynamics over multiple interacting objects is a challenging topic in machine learning. The challenges mainly stem from that the interacting systems are exponentially-compositional, symmetrical, and commonly geometrically-constrained.
Current methods, particularly the ones based on equivariant Graph Neural Networks (GNNs), have targeted on the first two challenges but remain immature for constrained systems. 
In this paper, we propose Graph Mechanics Network (GMN) which is combinatorially efficient, equivariant and constraint-aware. The core of GMN is that it represents, by generalized coordinates, the forward kinematics information (positions and velocities) of a structural object. In this manner, the geometrical constraints are implicitly and naturally encoded in the forward kinematics. Moreover, to allow equivariant message passing in GMN, we have developed a general form of orthogonality-equivariant functions, given that the dynamics of constrained systems are more complicated than the unconstrained counterparts. Theoretically, the proposed equivariant formulation is proved to be universally expressive under certain conditions. Extensive experiments  support the advantages of GMN compared to the state-of-the-art GNNs in terms of prediction accuracy, constraint satisfaction and data efficiency on the simulated systems consisting of particles, sticks and hinges, as well as two real-world datasets for molecular dynamics prediction and human motion capture. 

\end{abstract}

\section{Introduction}
\begin{wrapfigure}{r}{0.53\textwidth}
    \vskip-0.2in
   \begin{center}
    \includegraphics[width=0.53\textwidth]{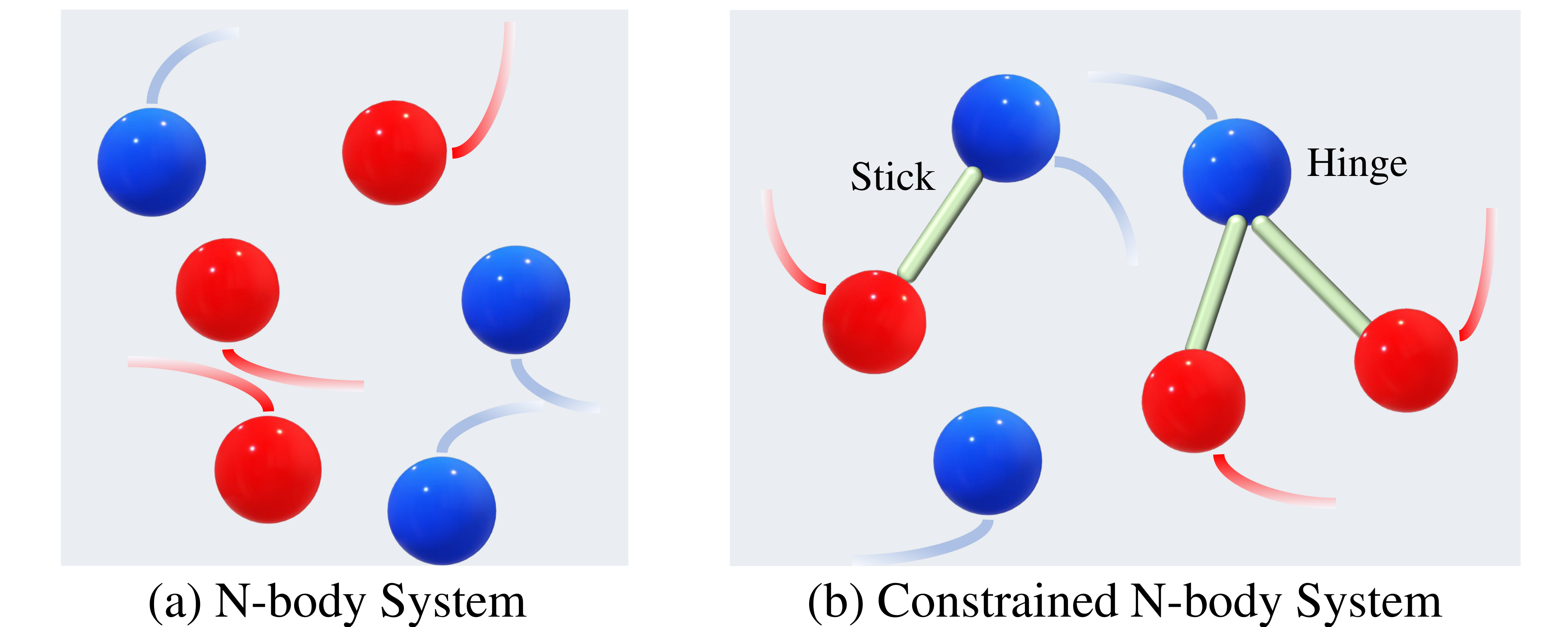}
  \end{center}
  \vskip-0.2in
  \caption{N-body vs. constrained N-body (red/blue balls denote positive/negative charges).}
  \label{fig.illustration}
  \vskip-0.2in
\end{wrapfigure}
Representing and reasoning about the relations and dynamics of a group of interacting objects is among the core aspects of human intelligence~\citep{tenenbaum2011grow}. As a motivating example, we consider the N-body system~\citep{kipf2018neural} where the movement of a single charged particle is attracted or repelled by other charged particles. 
Physicists have revealed that this process can be modeled by Newton's laws along with the Coulomb force.
One may wonder, however, if we can teach a machine to rediscover the underlying physics by solely observing the particles' states.
This thinking has actually inspired the study of learning to model interacting systems, which now is a prevailing topic in machine learning~\citep{battaglia2016interaction,thomas2018tensor,kohler2019equivariant,sanchez2019hamiltonian,fuchs2020se,martinkus2020scalable,satorras2021en}.

Learning to model interacting systems is challenging. First, the systems are combinatorially complex, on account of that objects can be composed in combinatorially many possible arrangements~\citep{battaglia2016interaction}. This challenge, to some extent, can be addressed by making use of Graph Neural Networks (GNNs)~\citep{wu2020comprehensive}. By regarding objects as nodes and interactions as edges, GNNs extract information via message passing, which is able to characterize arbitrarily ordered objects and combinatorial relations.
The second challenge is related to an important symmetry in physics: the model we use should be equivariant to any Euclidean transformation (translation/reflection/rotation) of the input. This complies with the fact that physics rules keep unchanged regardless of the reference coordinate system. 
Several works~\citep{fuchs2020se,satorras2021en} have investigated equivariance upon GNNs and exhibited remarkable benefits on N-body.

Another challenge, though less explored, is that the systems could be geometrically constrained. Geometric constraints arise in common practical systems, for example, in robotics where the joints of mechanical arms should be linked one by one, or in biochemistry where the atoms of molecules are connected by chemical bonds. When modeling these constrained systems, it is crucial to enforce the model to output legal predictions. For instance, the lengths/angles of chemical bonds in a molecular closely determine its chemical property which will be changed dramatically if the structural constraints are broken. As mentioned above, equivariant GNNs~\citep{fuchs2020se,satorras2021en} have achieved desirable performance on the N-body system---this system, nevertheless, lacks of constraint.
Considering constraints is not easy, as the dynamics of all elements within a constrained system evolve in a joint and complex manner. 
Unfortunately, to the best of our knowledge, there is no research (particularly among equivariant GNN methods) that learns to model the dynamical systems of multiple interacting objects under geometrical constraints.

In this paper, we propose Graph Mechanics Network (GMN) that can tackle the above three challenges simultaneously: \textbf{I.} To cope with the combinatorial complexity, GMN takes advantage of graph models to encode the states of and interactions between objects, analogous to previous approaches. \textbf{II.} For the constraint satisfaction, 
GMN resorts to generalized coordinates, a well known notion in conventional mechanics to model the dynamics of structural objects. Here, a structural object (such as the stick and hinge in Fig.~\ref{fig.illustration}) is defined as a set of multiple rigidly connected particles. In GMN, the constraints are implicitly encoded in the forward kinematics that describes the Cartesian states as the function of generalized coordinates. This strategy can inherently maintain the constraints and requires no external regulation. \textbf{III.}  GMN is equivariant. In GMN, we need to compute the interaction forces in the Cartesian space (see Eq.~\ref{equ.interactionforce}), and infer the accelerations of the generalized coordinates based on 
the inverse dynamics (see Eq.~\ref{equ.genaccinf}), both of which are learned by a general form of equivariant functions (see Eq.~\ref{eq.ge}). 
Notably, the proposed equivariant formulation is more efficient to compute compared to the previous versions based on spherical harmonics~\citep{thomas2018tensor,fuchs2020se}, or more general than the one developed by EGNN~\citep{satorras2021en}. More importantly, we have theoretically discussed when and how our formulation can universally approximate any equivariant function.   

To evaluate the advantages of GMN, we have constructed a simulated dataset composed of three types of objects: particles, sticks and hinges, which is a complex case of the N-body system~\citep{kipf2018neural} and can be used as building blocks for common systems. Under various scenarios in terms of different ratios of object types and different numbers of training data, we empirically verify the superiority of GMN compared to state-of-the-art models in prediction accuracy, constraint satisfaction and data efficiency. In addition, we also test the effectiveness of GMN on two real-world datasets: MD17~\citep{chmiela2017machine} and CMU Motion Capture~\citep{cmu2003motion}.

\section{Related Work}

Learning to simulate complex physical systems has been shown to greatly benefit from using GNNs. Interaction Network (IN) proposed by~\citet{battaglia2016interaction} is the first attempt for this purpose, and it can be deemed as a special kind of GNNs to learn how the system interacts and how the states of particles evolve. Later researches have extended IN in different aspects: HRN~\citep{mrowca2018flexible} utilizes hierarchical graph convolution for tackling objects of various geometrical shapes and materials, NRI~\citep{kipf2018neural} further explicitly infers the interactions with the help of a variational auto-encoder, and Hamiltonian graph networks~\citep{sanchez2019hamiltonian} equip GNNs with ordinary differential equations and Hamiltonian
mechanics for energy conservation. However, all above approaches have ignored the symmetry in physics and the GNN models they use are not Euclidean equivariant. There is a  subset of models~\citep{ummenhofer2019lagrangian,sanchez2020learning, pfaff2020learning} that partially implement symmetries, but they only enforce translation equivariance but not rotation equivariance. 
On the other hand, it is nontrivial to enforce rotation equivariance. Tensor-Field networks~\citep{thomas2018tensor} uses filters built from spherical harmonics to allow 3D rotation equivariance, and this idea has been developed by SE(3) Transformer~\citep{fuchs2020se} that further takes the attention mechanism into account. Anther class of works~\citep{finzi2020generalizing,hutchinson2021lietransformer} resorts to the Lie convolution for the equivariance on any Lie group, based on lifting and sampling. Recently, EGNN~\citep{satorras2021en} has proposed a simple yet effective form of equivariant message passing on graphs, which does not require computationally expensive higher-order representations while
still achieving better performance on N-body. 

As interpreted before, ensuring geometrical constraints is crucial for many practical systems, which, nevertheless, is seldom investigated in aforementioned works. Although several attempts~\citep{yang2020learning,finzi2020simplifying} have been proposed for learning to enforce constraints, they explicitly augment the training loss with soft Lagrangian regulation and thus are completely data-driven and have no guarantee of generalization for limited training data.  DeLaN~\citep{lutter2019deep} also employs generalized coordinates to describe the kinematics of the rigid object. Nevertheless, it only target on the physical process of a single object, other than the complex systems with multiple rigid and structural objects that are the focus of this paper. In DPI-Net~\citep{li2018learning}, the BoxBath task does share the similar strategy to us by first predicting the canonical coordinates of the box and then using the forward kinematic model to obtain the Cartesian positions. However, the passing messages in DPI-Net are scalars other than directional vectors (positions, velocities, and accelerations) used in our work.
After all, both DeLaN and DPI-Net never study equivariant models.


\section{Graph Mechanics Neural Network}


We begin with introducing the N-body system~\citep{kipf2018neural}, as illustrated in Fig.~\ref{fig.illustration} (a). This system consists of $N$ interacting particles $\{P_i\}_{i=1}^N$ of the same mass, and the kinematics states of each particle are defined as $\mS_i=(\vx_i, \vv_i)$, where $\vx_i, \vv_i\in\R^{3}$ are the position and velocity vectors, respectively. There could be certain non-vector information of each particle (such as charge), which is represented by a $c$-channel feature $h_i\in\R^c$ \footnote{Although $h_i$ could be of multiple dimensions, we still call it as a non-vector value since the dimension of $h_i$ is distinct from that of the kinematics vector, for example $\vx_i$, the latter of which has intrinsic geometry and should obey translation/rotation equivariance. This is also why we do not denote $h_i$ in bold.}. Suppose the system we study is conservative and the dynamics is driven by the interaction force  $\vf_{ij}\in\R^3$ between any pair of particles $i$ and $j$. According to Newton's second law, the acceleration of particle $i$, $\va_i\in\R^3$ is proportional to the aggregated force from other particles $\sum_{j\neq i}\vf_{ij}$. All symbols will be specified with a superscript $t$ for temporal denotations, \emph{e.g.}, $\vx_i^{t}$ indicating the position of particle $i$ at time $t$. In this paper, we are mainly concerned with the prediction task: we need to seek out a function $\phi(\{(\mS^0_i,h_i^0)\}_{i=1}^N)$ given the initial states $\{\mS^0_i\}_{i=1}^N$ and features $\{h^0_i\}_{i=1}^N$ to forecast the future states $\{\mS^T_i\}_{i=1}^N$ at time $T$. 

As presented in Introduction, two kinds of inductive biases have been explored previously. The first one is to apply the graph structure to capture the distribution of particle states and their interactions~\citep{battaglia2016interaction,kipf2018neural}, where, particularly, the particle states $\mS^0_i$ (along with $h_i^0$) are as node features and the interaction forces $\vf^0_{ij}$ as edge messages. In this way, the transition function $\phi$ boils down to a GNN model. 
The second inductive bias is that $\phi$ should be equivariant to any translation/reflection/rotation of the input states. By saying equivariance, we imply
\begin{align}
\label{eq.equivariance}
\phi(\{(g\cdot\mS^0_i,h_i^0)\}_{i=1}^N)=g\cdot\phi(\{(\mS^0_i,h_i^0)\}_{i=1}^N), \forall g\in\gO(3), \forall \mS^0_i, \forall h^0_i. 
\end{align}
Here, $\gO(3)$ defines the 3D orthogonal group~\citep{fuchs2020se} that consists of translation, reflection and rotation transformations; $g\cdot\mS^0_i$ denotes to perform transformation $g$ on the states $\mS^0_i$, and it is instantiated as $\mR(\vx^0_i+\vb)$ for the position and  $\mR\vv^0_i$ for the velocity, where $\mR\in\R^{3\times3}$ is the orthogonal matrix and $\vb\in\R^{3}$ is the translation vector. 

Several works~\citep{thomas2018tensor,kohler2019equivariant,fuchs2020se,satorras2021en} have investigated both inductive biases, among which EGNN~\citep{satorras2021en} has achieved promising performance on N-body. The typical process in EGNN iterates the following steps:
\begin{wrapfigure}[6]{l}{.5\textwidth}
\begin{align}
\va_i^{l}, h_i^{l}&=\sum_{j} \varphi_{\text{egnn}}(\vx_{ji}^{l-1},h_i^{l-1}, h_j^{l-1}, e_{ji}), \label{equ.egnnupd0}\\
\vv_i^{l} &= \psi(h_i^{l-1})\vv_i^{l-1}+\va_i^{l}, \label{equ.egnnupd1}\\
\vx_i^{l} &= \vx_i^{l-1}+\vv_i^{l},\label{equ.egnnupd2}
\end{align}
\end{wrapfigure}
where the superscript $l$ denotes the $l$-th layer; the acceleration $\va_i^{l}$ returned by the message aggregation of $\varphi_{\text{egnn}}$ in Eq.~\ref{equ.egnnupd0} is adopted for the update of the velocity $\vv_i^{l}$ in Eq.~\ref{equ.egnnupd1} (multiplied by a scalar $\psi(h_i^{l-1})\in\R$), followed by the renovation of the position $\vx_i^{l}$ in Eq.~\ref{equ.egnnupd2}. The formulation of 
$\varphi_{\text{egnn}}$ is physically reasonable, since the interaction (actually the Coulomb force) between particles $i$ and $j$ truly depends on their relative position $\vx_{ji}^{l-1}=\vx_{i}^{l-1}-\vx_{j}^{l-1}$, node features $h_i^{l-1}$ and $h_j^{l-1}$, and edge feature $e_{ji}$. To enable equivariant message passing, EGNN has developed a specific form of $\varphi_{\text{egnn}}$ to let $\va_i^{l}$ be equivariant and $h_i^{l}$ be invariant in terms of the input $\vx_{ji}^{l-1}$, which will be presented in Eq.~\ref{eq.egnnvarphi}. Notice that the computations for $\vf_i^l$ and $h_i^l$ are actually by two different functions, and we have abbreviated them into one in Eq.~\ref{equ.egnnupd0} (and also Eq.~\ref{equ.interactionforce} later), since they share the same inputs; besides, their parameters are shared following EGNN. 


\subsection{Our General Architecture}\label{sec.model}

Despite the desirable performance on N-body, existing methods are incapable of maintaining the geometric constraint. In this section, we design a general architecture that intrinsically meets the requirement of geometry constraints by making use of the generalized coordinates. 

We define $\gO_k=\{P_i\}_{i=1}^{n_k}$ a structural object composed of $n_k$  rigidly connected particles. Fig.~\ref{fig.illustration} (b) illustrates two examples of the structural object, the stick with 2 connected particles and the hinge with 3 particles. To preserve the distance, the dynamics of the two particles on a stick should be updated in a joint way, rather than fulfilling the independent process in EGNN (Eq.~\ref{equ.egnnupd1} and~\ref{equ.egnnupd2}). Besides, the force on each particle within $\gO_k$ will indirectly influence the dynamics of others through the physical connections. This requires us to analyze the dynamics of the particles in $\gO_k$ as a whole, which is implemented by generalized coordinates.  
There could be multiple generalized coordinates for each $\gO_k$, some located in the Cartesian space but some in the angle space. For instance, the states of a stick  can be decoupled by two independent sets of generalized coordinates: the state of particle 1 as the Cartesian coordinates and the relative rotation angles of particle 2 to 1 as the angle coordinates. 
For conciseness, this section only focus on the Cartesian part which essentially determines the local coordinates in $\gO_k$, with providing full examples in~\textsection~\ref{sec.implement}. We denote the position, velocity and acceleration of the generalized Cartesian coordinates as $\vq_k$, $\dot{\vq}_k$ and $\ddot{\vq}_k$, respectively. 

We now detail how to update the states of $\gO_k$. In Fig.~\ref{fig.flowchat}, we first compute the interaction force between each particle and others, and aggregate information of all particles within each structural object to infer the acceleration of the generalized coordinates (which is termed as the generalized acceleration henceforth) by inverse dynamics. 
\begin{wrapfigure}[8]{r}{0.45\textwidth}
  \begin{center}
    \includegraphics[width=0.45\textwidth]{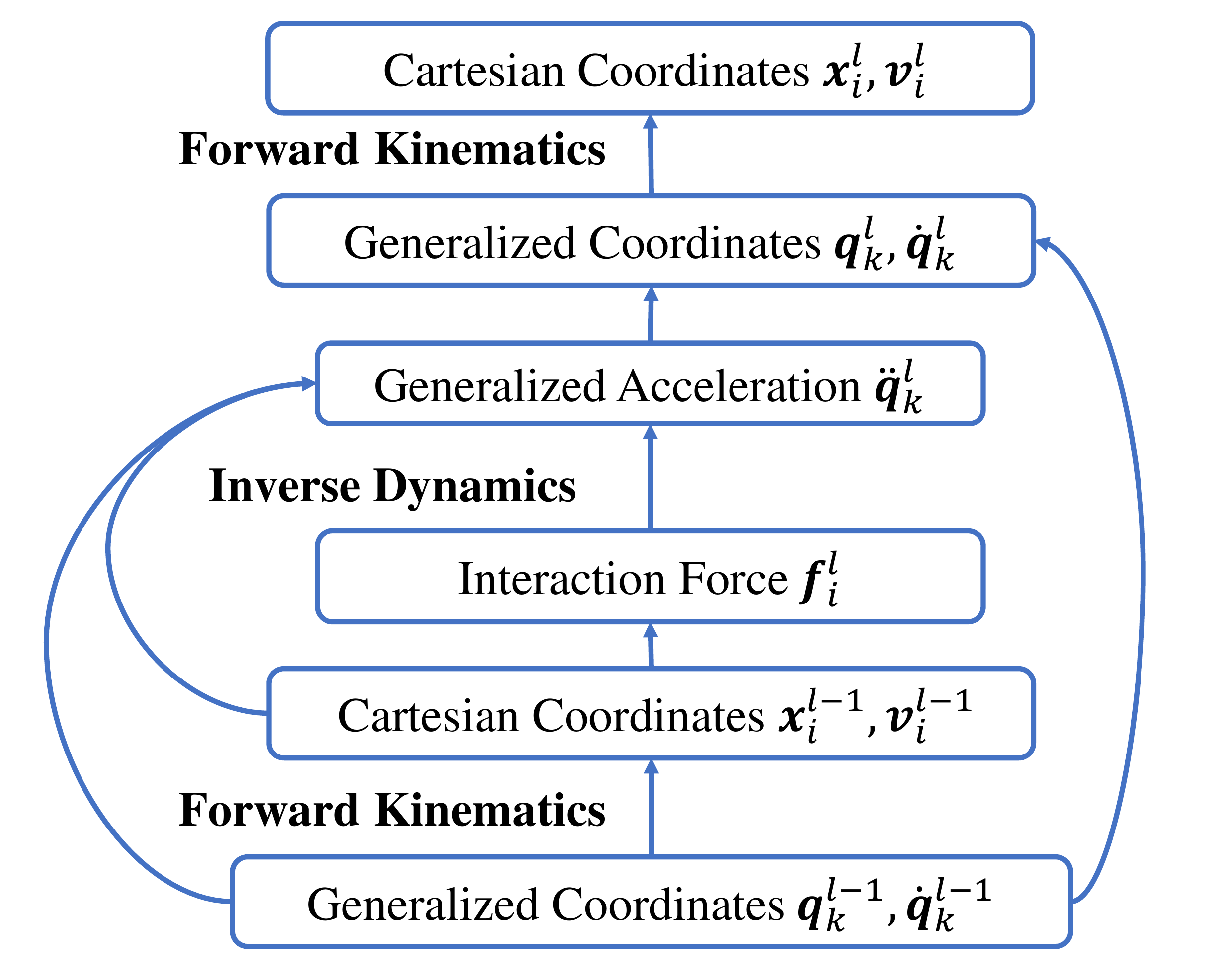}
  \end{center}
  \vskip-0.2in
  \caption{The flowchart of our GMN.}
  \label{fig.flowchat}
  \vskip-1in
\end{wrapfigure}
Then, the dynamical updates are carried out in the space of the generalized coordinates. Finally, the updated generalized coordinates will be projected back to the particles' states via the forward kinematics. 
\begin{align}
\vf_i^{l}, h_i^{l}&=\sum_{j} \varphi_1(\vx_{ji}^{l-1},h_i^{l-1}, h_j^{l-1}, e_{ji}), \label{equ.interactionforce} \\
\ddot{\vq}_k^{l} &=\sum_{i\in\gO_k} \varphi_2(\vf_i^{l},\vx_{ki}^{l-1}, \vv_{ki}^{l-1}), 
\label{equ.genaccinf} \\
\dot{\vq}_k^{l} &= \psi(\sum_{i\in\gO_k}h_i^{l-1})\dot{\vq}_k^{l-1}+\ddot{\vq}_k^{l}, \label{equ.gendynupd1}\\
\vq_k^{l} &= \vq_k^{l-1}+\dot{\vq}_k^{l}, \label{eq.pdu} \\
\vx_i^{l},\vv_i^{l}&=\text{FK}(\vq_k^{l},\dot{\vq}_k^{l}), \forall i\in\gO_k, \label{eq.fk}
\end{align}
where, the elements of the system can be described in two views, $\{\gO_k\}_{k=1}^K$ as the object-level view and $\{P_i\}_{i=1}^N$ as the particle-level view; for distinction, we  index the structural object with the subscript $k$ and particles with $i$ or $j$. 
We explain each equation separately.\\
\textbf{Interaction force (Eq.~\ref{equ.interactionforce}).}
The interaction force $\vf_i^l$ is computed analogous to Eq.~\ref{equ.egnnupd0}. EGNN straightly regards the interaction force to be the acceleration of each particle in Eq.~\ref{equ.egnnupd0}. But here, given the constraints between particles, we record the force as an intermediate variable that will contribute to the inference of the generalized acceleration in the next step. \\
\textbf{Inverse dynamics (Eq.~\ref{equ.genaccinf}).}
This step is the core of our methodology. The generalized acceleration $\ddot{\vq}_k^{l}$ is dependent to the forces $\vf_i^{l}$ on all particles within $\gO_k$, and their relative positions $\vx_{ki}^{l-1}=\vx_i^{l-1}-\vq_k^{l-1}$ and relative
velocities $\vv_{ki}^{l-1}=\vv_i^{l-1}-\dot{\vq}_k^{l-1}$  with regard to the generalized coordinates. The formulation of Eq.~\ref{equ.genaccinf} is physics-inspired. In Appendix (Eq.~\ref{eq.a0}), we have analytically derived the dynamics of hinges, where the acceleration of the hinge is indeed related to the forces, relative positions and relative velocities of all particles in each hinge. This is reasonable in mechanics, since the cross product $\vx_{ki}^{l-1}\times \vf_i^{l}$ yields the torque, and the relative velocity $\vv_{ki}^{l-1}$ is related to the centrifugal force of particle $i$ around $\vq_k$, both of which influence the acceleration $\ddot{\vq}_k$. 
Different from the analytical form which is always complex and hard to compute in practice, we will employ a learnable and equivariant function with universal expressivity. The details are in~\textsection~\ref{sec.emp}. \\
\textbf{Generalized update (Eq.~\ref{equ.gendynupd1}-\ref{eq.pdu}).}
The updates of the generalized coordinates are akin to Eq.~\ref{equ.egnnupd1} and~\ref{equ.egnnupd2} in EGNN, as the dimensions of the generalized coordinates have been made independent. Notice that in Eq.~\ref{equ.gendynupd1} the scalar factor for $\dot{\vq}_k^{l-1}$ takes as input the summation of all hidden features, which is a generalized form of Eq.~\ref{equ.egnnupd1} for multiple particles in $\gO_k$. \\
\textbf{Forward kinematics (Eq.~\ref{eq.fk}).}
Once the generalized coordinates have been refreshed, we can derive the states of all particles in $\gO_k$ by proceeding the forward kinematics. Different system of $\gO_k$ could have different type of the forward kinematics. Here we denote it as the function $\text{FK}(\cdot)$ in general, while providing the specifications in~\textsection~\ref{sec.implement}. 

Our method will reduce to EGNN if setting the generalized coordinates as the states of each particle and utilize the identify map in Eq.~\ref{equ.genaccinf} and~\ref{eq.fk}. Alg.~\ref{alg:example_simple} has summarized the updates for all objects. 

\subsection{Equivariant Message Passing}\label{sec.emp}

As shown in Eq.~\ref{eq.equivariance}, the Euclidean equivariance on the estimation function $\phi$ is necessary for ensuring the physical symmetry. When considering this property in our case, we demand the interaction force (Eq.~\ref{equ.interactionforce}) and generalized acceleration (Eq.~\ref{equ.genaccinf}) to be equivariant with respect to orthogonal transformations, while other equations are already equivariant\footnote{The translation equivariance holds naturally as we use relative positions in Eq.~\ref{equ.interactionforce}, making the force, acceleration, velocity to be invariant with respect to translation. After the addition with the previous position in Eq.~\ref{eq.pdu}, the updated position $\vq_k^l$ is translation equivariant.}. EGNN~\citep{satorras2021en} has developed a particular orthogonality-equivariant form for the acceleration output in Eq.~\ref{equ.egnnupd0}:
\begin{align}
\label{eq.egnnvarphi}
\varphi_{\text{egnn}}(\vx,h) &\coloneqq  \vx\sigma_w(\|\vx\|_2^2, h), 
\end{align}
where $\sigma_w(\cdot)$ is an arbitrary Multi-Layer Perceptron (MLP) with parameter $w$, and we have abbreviated other non-vector terms in Eq.~\ref{equ.egnnupd0} as $h$. This formulation does satisfy the rotation equivariance, but it is unknown if it can be generalized to functions (such as Eq.~\ref{equ.genaccinf}) with multiple input vectors, and more importantly, its representation completeness is never explored rigorously. In this section, we propose a general form of orthogonality-equivariant functions with necessary theoretical guarantees.

Without loss of generality, the target function we would like to enforce equivariance is denoted as $\varphi(\mZ, h): \R^{d \times m}\times\R^{c} \rightarrow \R^{d \times m'}$. We define the below formulation,
\begin{align}
\varphi(\mZ, h) &\coloneqq \mZ\sigma_w(\mZ^{\top}\mZ, h). 
\label{eq.ge}
\end{align}

It is easy to justify the function $\varphi(\mZ, h)$ in Eq.~\ref{eq.ge} is equivariant to any orthogonal matrix \emph{i.e.}, $\varphi(\mO\mZ, h)=\mO \varphi(\mZ, h)$, $\forall\mO\in\R^{d \times d}$, $\mO^{\top}\mO=\mI$. Apparently, Eq.~\ref{eq.ge} reduces to Eq.~\ref{eq.egnnvarphi} by setting the number of vectors in $\mZ$ as 1, namely, $m=m'=1$. We immediately have the following theory.
\begin{theorem}
\label{Th:universarity}
If $m\geq d$ and the row rank of $\mZ$ is full, \emph{i.e.} $\text{rank}(\mZ)=d$, then for any continuous orthogonality-equivariant function $\hat{\varphi}(\mZ, h)$, there must exist an MLP $\sigma_{w}$ satisfying $\|\varphi(\mZ, h)-\hat{\varphi}(\mZ, h)\|<\epsilon$ for arbitrarily small error $\epsilon$. 
\end{theorem}

The proof employs the universality of MLP~\citep{cybenko1989approximation,hornik1991approximation}, with the entire details deferred in Appendix. Theorem~\ref{Th:universarity} is nutritive, as it characterizes the rich expressivity of formulating the equivariant function via Eq.~\ref{eq.ge}. The condition holds in general for the function like Eq.~\ref{equ.genaccinf} whose input $\mZ=(\vf_i,\vx_{ki},\vv_{ki})\in\R^{3\times3}$ is of full rank when the force, position and velocity expand the whole space. Indeed, this condition holds with probability 1, stated by the following corollary.
\begin{corollary}
Assume $m\geq d$, and also the entries of $\mZ$ are drawn independently from a distribution that is absolutely continuous with respect to the Lebesgue measure in $\R$. Then, almost surely, the conclusion of Theorem~\ref{Th:universarity} holds.
\end{corollary}

When the condition $m\geq d$ is invalid, the universal approximation still maintains if restricted in the linear subspace expanded by the columns of $\mZ$. 
\begin{corollary}
\label{th:cor2}
For any continuous orthogonality-equivariant function $\hat{\varphi}(\mZ,h)$ located in the linear subspace expanded by the columns of $\mZ$, the conclusion of Theorem~\ref{Th:universarity} holds universally. 
\end{corollary}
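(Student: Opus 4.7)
The plan is to reduce Corollary~\ref{th:cor2} to the argument behind Theorem~\ref{Th:universarity} by exploiting the single extra hypothesis, namely that $\hat{\varphi}(\mZ, h)$ lies in the column span of $\mZ$. Concretely, I would set
\begin{equation*}
\mA(\mZ, h) \coloneqq \mZ^{+}\,\hat{\varphi}(\mZ, h),
\end{equation*}
where $\mZ^{+}$ is the Moore--Penrose pseudoinverse. The subspace assumption gives $\mZ\mZ^{+}\hat{\varphi}(\mZ, h) = \hat{\varphi}(\mZ, h)$, so $\hat{\varphi}(\mZ, h) = \mZ\,\mA(\mZ, h)$. This plays the same structural role as the identity $\mZ\mZ^{+} = \mI_d$ that is automatically available in Theorem~\ref{Th:universarity} when $m \geq d$ and $\mZ$ has full row rank; the present weaker condition just restricts the equality to the relevant output direction.

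Next, I would show that $\mA$ is $\gO(d)$-invariant. Using the SVD of $\mZ$, one easily verifies $(\mO\mZ)^{+} = \mZ^{+}\mO^{\top}$ for any $\mO \in \gO(d)$; combined with the assumed equivariance $\hat{\varphi}(\mO\mZ, h) = \mO\,\hat{\varphi}(\mZ, h)$, this yields
\begin{equation*}
\mA(\mO\mZ, h) = \mZ^{+}\mO^{\top}\mO\,\hat{\varphi}(\mZ, h) = \mA(\mZ, h).
\end{equation*}
I would then invoke the standard fact that two full-column-rank matrices $\mZ_1, \mZ_2 \in \R^{d\times m}$ lie in the same $\gO(d)$-orbit (under left multiplication) iff $\mZ_1^{\top}\mZ_1 = \mZ_2^{\top}\mZ_2$ (proved by matching QR factorizations and extending the resulting partial isometry to a full orthogonal map). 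Consequently $\mA$ descends along the quotient map $\mZ \mapsto \mZ^{\top}\mZ$: there is a continuous $\mB$ with $\mA(\mZ, h) = \mB(\mZ^{\top}\mZ, h)$ on the open set of full-column-rank inputs.

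To finish, I would apply the universal approximation theorem of~\citet{cybenko1989approximation,hornik1991approximation} to $\mB$ restricted to any compact set, producing an MLP $\sigma_w$ with $\|\sigma_w(\mZ^{\top}\mZ, h) - \mB(\mZ^{\top}\mZ, h)\| < \epsilon'$. Since $\|\mZ\|$ is bounded on the compact domain, left-multiplying yields
\begin{equation*}
\|\varphi(\mZ, h) - \hat{\varphi}(\mZ, h)\| = \|\mZ\,\sigma_w(\mZ^{\top}\mZ, h) - \mZ\,\mB(\mZ^{\top}\mZ, h)\| < \epsilon,
\end{equation*}
which is the desired conclusion.

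The main obstacle I anticipate is the discontinuity of the pseudoinverse at rank-deficient points, which prevents $\mA$ (and therefore $\mB$) from being a priori globally continuous on all of $\R^{d\times m}$. The cleanest remedy is to carry out the approximation on the open dense stratum where $\mZ$ has full column rank, which is exactly the generic regime that the corollary targets (mirroring the full-rank hypothesis used in Theorem~\ref{Th:universarity}); if needed, the bound can be extended to the closure by continuity of both $\varphi$ and $\hat{\varphi}$ and a standard limiting argument on the approximating MLP.
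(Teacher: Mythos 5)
Your proposal is correct and follows essentially the same route as the paper: both write $\hat{\varphi}(\mZ,h)=\mZ\,\pi(\mZ,h)$ using the column-span hypothesis, show that the coefficient is an $\gO(d)$-invariant function of $\mZ$ and hence a function of $\mZ^{\top}\mZ$ via the orbit characterization, and conclude with MLP universality. Your pseudoinverse $\mZ^{+}$ is exactly the paper's $\mV_{\mZ}\bigl(\mS_{+}^{-1},\mathbf{0}\bigr)\mU_{\mZ}^{\top}$ built from the full SVD, and your caveat about rank-deficient points corresponds to the implicit full-column-rank restriction in the paper's use of $\mS_{+}^{-1}$.
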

Corollary~\ref{th:cor2} tells that the message passing $\varphi_{\text{egnn}}$ in Eq.~\ref{eq.egnnvarphi} is not universally expressive since $m<d$, and can only fit the vectors parallel to $\vx$ (\emph{i.e.} the relative position). Yet, $\varphi_{\text{egnn}}$ is still physically complete for the Coulomb force that is oriented by the relative position between two particles.

In our experiments, we find that more stable performance is delivered by further adding the normalization term specifically when $m>1$:
\vskip -0.3in
\begin{align}
\varphi(\mZ, h) &\coloneqq\mZ\sigma_w(\mZ^{\top}\mZ/\|\mZ^{\top}\mZ\|_F, h), 
\label{eq.genorm}
\end{align}
\vskip -0.1in
\noindent
where $\|\cdot\|_F$ computes the Frobenius norm. Notice that adding normalization does not change the conclusions in Theorem~\ref{Th:universarity} and its two corollaries, since the norm is also a function of $\mZ^{\top}\mZ$ that can be approximated by MLP. 
We implement $\varphi_1$ in Eq.~\ref{equ.interactionforce} and  $\varphi_2$ in Eq.~\ref{equ.genaccinf} by using the general formulation Eq.~\ref{eq.genorm}, where $\mZ=\vx_{ji}^{l-1}$ and $\mZ=(\vf_i^{l}, \vx_{ki}^{l-1}, \vv_{ki}^{l-1})$, respectively. Note that for the update of hidden feature $h_i^l$ in Eq.~\ref{equ.interactionforce}, we do not need equivariance but invariance, hence we set $h_i^l=\sigma_{w}(\mZ^{\top}\mZ/\|\mZ^{\top}\mZ\|_F, h_i^{l-1})$ by just keeping the invariant part in Eq.~\ref{eq.genorm}.

\subsection{Implementations of Forward Kinematics}
\label{sec.implement}

\textbf{Implementation of hinges.} A hinge, as displayed in Fig.\ref{fig.implementation} (a), consists of three particles 0, 1, 2, and two sticks 01 and 02.
The freedom degrees of this system can be explained in this way: particle 0 moves freely, and particles 1 and 2 can only rotate round particle 0 owing to the length constraint by the two sticks. Hence, the generalized coordinates include the states of particle 0 denoted as $\vq_0\in\R^3$ and the rotation Euler angles of stick 01 as $\bm{\theta}_{01}\in\R^3$ and 02 as $\bm{\theta}_{02}\in\R^3$. 
\begin{wrapfigure}{r}{0.5\textwidth}
\vskip-0.2in
  \begin{center}
    \includegraphics[width=0.5\textwidth]{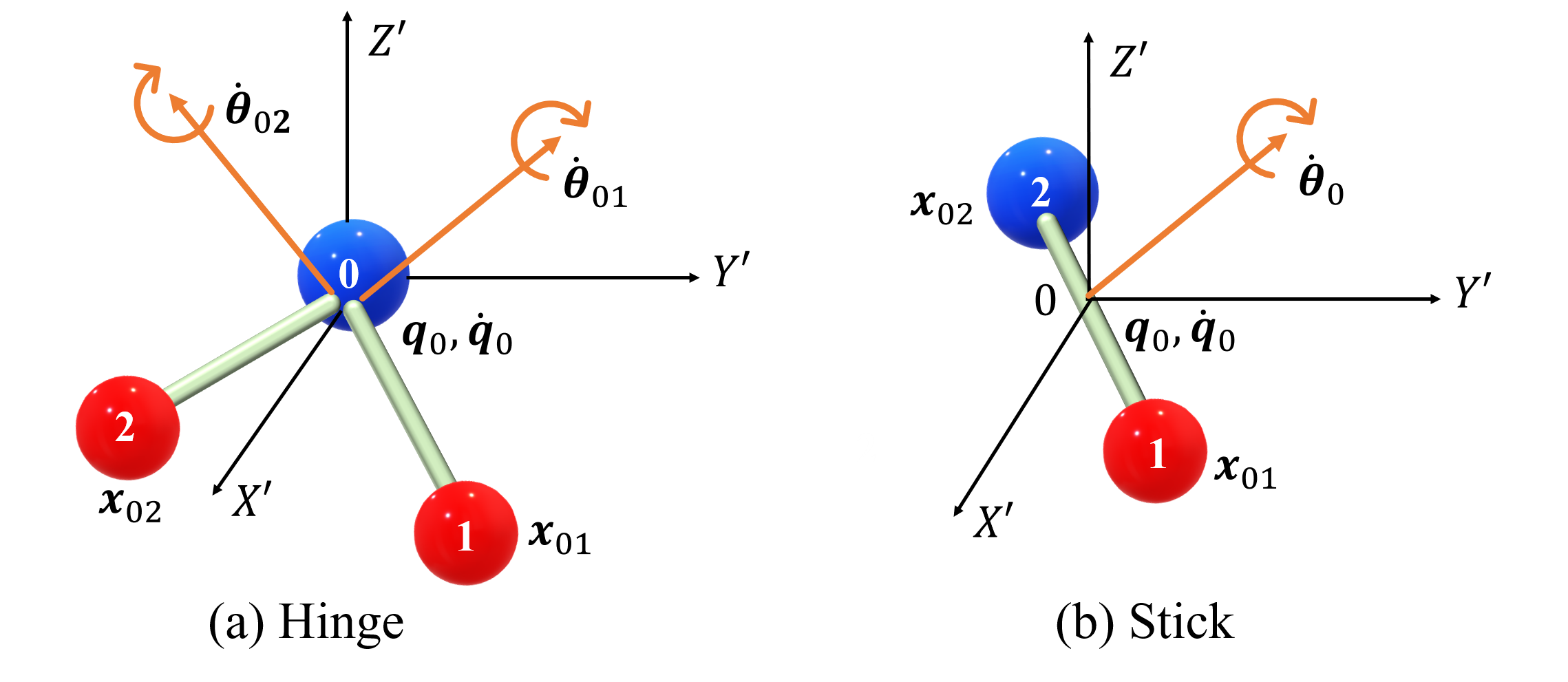}
  \end{center}
  \vskip-0.2in
  \caption{Illustrations of hinges and sticks.}
  \label{fig.implementation}
  \vskip-0.2in
\end{wrapfigure}
As $\vq_0$ are Cartesian, the dynamical update follows directly Eq.~\ref{equ.interactionforce}-\ref{eq.pdu}. 
With the forces $\{\vf_i^{l}\}_{i=0}^3$ estimated in Eq.~\ref{equ.interactionforce} and acceleration $\ddot{\vq}_0^{l}$ by Eq.~\ref{equ.genaccinf}, the angle acceleration of $\bm{\theta}_{01}$ is calculated as $\ddot{\bm{\theta}}_{01}^{l}=(\vx_{01}^{l-1}\times(\vf_1^{l}-\ddot{\vq}_0^{l}))/\|\vx_{01}^{l-1}\|^2$ according to rigid mechanics, where the numerator is the relative torque, the denominator is the moment of inertia under unit mass, and $\times$ means the cross product.
Then the angle velocity is $\dot{\bm{\theta}}_{01}^{l}=\psi'(h_0^{l-1}+h_1^{l-1}+h_2^{l-1})\dot{\bm{\theta}}_{01}^{l-1}+\ddot{\bm{\theta}}_{01}^{l}$ similar to~Eq.~\ref{equ.gendynupd1}. 
We do not need to record $\bm{\theta}_{01}$, since the forward kinematics can be conducted without it. In detail, the forward kinematics in
Eq.~\ref{eq.fk} is:
\begin{align}
\vx_1^{l} = \vq_0^{l} + \text{rot}(\dot{\bm{\theta}}_{01}^{l})\vx_{01}^{l-1}, \quad \vv_1^{l} = \dot{\vq}_{0}^{l} + \dot{\bm{\theta}}_{01}^{l}\times \vx_{01}^{l}, \label{eq.angle-fk2}
\end{align}
\vskip -0.1 in
\noindent
where $\text{rot}(\dot{\bm{\theta}}_{01}^{l})$ indicates the rotation matrix around the direction of $\dot{\bm{\theta}}_{01}^{l}$ by absolute angle $\|\dot{\bm{\theta}}_{01}^{l}\|$.  The dynamic updates for particle 2 is similar. We put the whole details into Alg.~\ref{alg:example_simple} in Appendix.

\textbf{Implementation of sticks.}
For a stick with particles 1 and 2 in Fig~\ref{fig.implementation} (b), we choose the center as the generalized Cartesian coordinate $\vq_0$, and the rotation of particle 1 $\bm{\theta}_{01}$ and particle 2 $\bm{\theta}_{02}$ ($\bm{\theta}_{01}=\bm{\theta}_{02}$) as the generalized angle coordinates. The dynamics propagation of $\vq_0$ is given by Eq.~\ref{equ.interactionforce}-\ref{eq.pdu}. There are two choices to compute $\ddot{\vq}_0^l$, one using the general form in Eq.~\ref{equ.genaccinf} and the other one leveraging a simplified version as  $\ddot{\vq}_0^l=\sum_{i\in\gO_k}\varphi(\vf_i^l)$ by explicitly omitting the relative position and velocity. The physical motivation of introducing the simplified version is that the acceleration of a stick center is only affected by the forces according to the theorem of the motion of the center of mass. The angle accelerations are $\ddot{\bm{\theta}}_{01}^{l}=(\vx_{01}^{l-1}\times\vf_1^{l}+\vx_{02}^{l-1}\times\vf_2^{l})/(\|\vx_{01}^{l-1}\|^2+\|\vx_{02}^{l-1}\|^2)$, and the velocity becomes $\dot{\bm{\theta}}_{01}^{l}=\psi'(h_1^{l-1}+h_2^{l-1})\dot{\bm{\theta}}_{01}^{l-1}+\ddot{\bm{\theta}}_{01}^{l}$. The states of particles 1 and 2 are renewed as the same as Eq.~\ref{eq.angle-fk2}.

\textbf{Learnable FK.}
Besides the above hand-crafted FK, we propose a learnable variant by replacing Eq. (\ref{equ.gendynupd1}-\ref{eq.fk}) with the update $\vv_i^l = \phi(h_i^{l-1})\vv_i^{l-1} + \rho (\ddot{\vq}_k^l, \vx_{ki}^{l-1}, \vf_i^l),\vx_i^l = \vx_i^{l-1} + \vv_i^l$, where $\rho$ is the equivariant function via Eq.~\ref{eq.genorm}. Full details are provided in Appendix~\ref{sec.learnablefk}. 

\section{Experiments}

\subsection{Simulation Dataset: Constrained N-Body}

\textbf{Datasets.}
We inherit the 3D extension of~\citet{fuchs2020se} based on the N-body simulation introduced in~\citet{kipf2018neural}. For each trajectory, we provide the initial states of the system $\{\mS_i^0 \}_{i=1}^N$,  the particles' charges $\{c_i\in (-1, 1)\}_{i=1}^{N}$ and a configuration indicating which particles are connected by sticks or hinges. The task is to predict the final positions $\{\vx_i^T\}_{i=1}^N$ of the particles when $T=1000$. 
The validation and testing sets contain 2000 trajectories. We evaluate the prediction error by the MSE metric. Compared with the simulation conducted in ~\citet{fuchs2020se, satorras2021en}, our dataset is more challenging in three senses: \textbf{1.} We consider systems with multiple scales, including 5, 10, and 20 particles in total, respectively. \textbf{2.} We introduce to the system the dynamics of hinges and sticks (depicted in Appendix~\ref{sec.dynamics}), and construct various combinations between these objects. \textbf{3.} We investigate the performance of each model across different scales of training set, \emph{e.g.}, 500 and 1500, to see how the models perform with scarce or relatively abundant training data. The system consisting of $p$ isolated particles, $s$ sticks and $h$ hinges, is abbreviated as $(p,s,h)$.

\textbf{Implementation details.}
Following ~\cite{satorras2021en}, we use a linear mapping of the scale of initial velocity $||\vv_i^0||_2$ as the input node feature $h_i^0$. The edge feature is provided by a concatenation of the product of charges $\vc_i\vc_j$ and an edge type indicator $I_{ij}$, where $I_{ij}$ is valued as 0 if node $i$ and $j$ are disconnected, 1 if connected by a stick, and 2 if connected by a hinge. Note that this edge type indicator is an augmentation over the original setting in EGNN, designed to enforce EGNN and other baselines the ability to distinguish different types of edges, namely, with or without constraints. Other settings including the hyper-parameters are introduced in Appendix~\ref{sec.detailed_exp}.

\begin{table}[t!]
  \centering
  \small
  \setlength{\tabcolsep}{1.2pt}
  \caption{Prediction error ($\times 10^{-2}$)  on various types of systems. The header of each column ``$p,s,h$" denotes the scenario with $p$ isolated particles, $s$ sticks and $h$ hinges. Results averaged across 3 runs.}
  \resizebox{1\textwidth}{!}{
    \begin{tabular}{lccccc|ccccc}
    \toprule
          & \multicolumn{5}{c|}{$|$Train$|$ = 500}    & \multicolumn{5}{c}{$|$Train$|$ = 1500} \\
          & 1,2,0 & 2,0,1 & 3,2,1 & 0,10,0 & 5,3,3 & 1,2,0 & 2,0,1 & 3,2,1 & 0,10,0 & 5,3,3 \\
    \midrule
    Linear & 8.23\tiny{$\pm$0.00}  & 7.55\tiny{$\pm$0.00}  & 9.76\tiny{$\pm$0.00}  & 11.36\tiny{$\pm$0.00} & 11.62\tiny{$\pm$0.00} & 8.22\tiny{$\pm$0.00}  & 7.55\tiny{$\pm$0.00}  & 9.76\tiny{$\pm$0.00}  & 11.36\tiny{$\pm$0.00} & 11.62\tiny{$\pm$0.00} \\
    GNN   & 5.33\tiny{$\pm$0.07}  & 5.01\tiny{$\pm$0.08}  & 7.58\tiny{$\pm$0.08}  & 9.83\tiny{$\pm$0.04}  & 9.77\tiny{$\pm$0.02}  & 3.61\tiny{$\pm$0.13}  & 3.23\tiny{$\pm$0.07}  & 4.73\tiny{$\pm$0.11}  & 7.97\tiny{$\pm$0.44}  & 7.91\tiny{$\pm$0.31} \\
    TFN   & 11.54\tiny{$\pm$0.38} & 9.87\tiny{$\pm$0.27}  & 11.66\tiny{$\pm$0.08} & 13.43\tiny{$\pm$0.31} & 12.23\tiny{$\pm$0.12} & 5.86\tiny{$\pm$0.35}  & 4.97\tiny{$\pm$0.23}  & 8.51\tiny{$\pm$0.14}  & 11.21\tiny{$\pm$0.21} & 10.75\tiny{$\pm$0.08} \\
    SE(3)-Tr. & 5.54\tiny{$\pm$0.06}  & 5.14\tiny{$\pm$0.03}  & 8.95\tiny{$\pm$0.04}  & 11.42\tiny{$\pm$0.01} & 11.59\tiny{$\pm$0.01} & 5.02\tiny{$\pm$0.03}  & 4.68\tiny{$\pm$0.05}  & 8.39\tiny{$\pm$0.02}  & 10.82\tiny{$\pm$0.03} & 10.85\tiny{$\pm$0.02} \\
    RF    & 3.50\tiny{$\pm$0.17}  & 3.07\tiny{$\pm$0.24}  & 5.25\tiny{$\pm$0.44}  & 7.59\tiny{$\pm$0.25}  & 7.73\tiny{$\pm$0.39}  & 2.97\tiny{$\pm$0.15}  & 2.19\tiny{$\pm$0.11}  & 3.80\tiny{$\pm$0.25}  & 5.71\tiny{$\pm$0.31}  & 5.66\tiny{$\pm$0.27} \\
    EGNN  & \underline{2.81}\tiny{$\pm$0.12}  & \underline{2.27}\tiny{$\pm$0.04}  & \underline{4.67}\tiny{$\pm$0.07}  & \underline{4.75}\tiny{$\pm$0.05}  & \underline{4.59}\tiny{$\pm$0.07}  & \underline{2.59}\tiny{$\pm$0.10}  & 1.86\tiny{$\pm$0.02}  & \underline{2.54}\tiny{$\pm$0.01}  & \underline{2.79}\tiny{$\pm$0.04}  & 3.25\tiny{$\pm$0.07} \\
    EGNNReg & 2.94\tiny{$\pm$0.01}  & 2.66\tiny{$\pm$0.06}  & 7.01\tiny{$\pm$0.34}  & 5.03\tiny{$\pm$0.08}  & 6.31\tiny{$\pm$0.04}  & 2.74\tiny{$\pm$0.08}  & \underline{1.58}\tiny{$\pm$0.03}  & 2.62\tiny{$\pm$0.05}  & 3.03\tiny{$\pm$0.07}  & \underline{3.07}\tiny{$\pm$0.04} \\
    \midrule
    GMN   & \textbf{1.84}\tiny{$\pm$0.02}  & \textbf{2.02}\tiny{$\pm$0.02} & \textbf{2.48}\tiny{$\pm$0.04} & \textbf{2.92}\tiny{$\pm$0.04}  & \textbf{4.08}\tiny{$\pm$0.03} & \textbf{1.68}\tiny{$\pm$0.04}   & \textbf{1.47}\tiny{$\pm$0.03}   & \textbf{2.10}\tiny{$\pm$0.04}   & \textbf{2.32}\tiny{$\pm$0.02}   & \textbf{2.86}\tiny{$\pm$0.01} \\
    \bottomrule
    \end{tabular}%
    }
  \label{tab.SOTA}%
  \vskip -0.2in
\end{table}%

\textbf{Comparison with SOTAs.}
Table~\ref{tab.SOTA} reports the performance of GMN and various compared models: EGNN~\citep{satorras2021en} and its regulated version EGNNReg, SE(3)-Transformer~\citep{fuchs2020se}, Radial-Field (RF)~\citep{kohler2019equivariant}, Tensor-Field-Network (TFN)~\citep{thomas2018tensor}, and other two baselines, GNN and the Linear prediction~\citep{satorras2021en}. Regarding EGNN and EGNNReg, they share the same backbones (\emph{i.e.} $\varphi_1$ and $\psi$ ) and training hyper-parameters (learning rates, layer number, etc) with our GMN for a fair comparison. For EGNNReg, we explicitly involve a regularization term  during training by enforcing the geometrical constraints, namely preserving the lengths between two particles on sticks and hinges; the regularization factor is ranged from 0.01 to 0.1, where the value giving the best performance is selected. The default settings of SE(3)-Transformer and TFN perform poorly on our experiments, hence we have tried our best to tune their hyper-parameters by validation. From Table~\ref{tab.SOTA}, we have these observations:

\textbf{1.} GMN achieves the best performance in all scenarios, suggesting its general superiority.  
\textbf{2.} GMN is more robust when the complexity of the system increases. On $|\text{Train}|=500$, for example, the performance of GMN degenerates slightly by increasing the number of particles and hinges (\emph{e.g.} from (1,2,0) to (3,2,1)), while other methods (such as EGNN) drops significantly.
\textbf{3.}
Reducing the training size will hinder the performance of all compared methods remarkably. On the contrary, GMN still performs promisingly in general. For instance, on (3,2,1), EGNNReg becomes much worse from 2.62 to 7.01 when the volume of training data is decreased from 1500 to 500, whereas the change of GMN is smaller (2.10 v.s. 2.48). This is reasonable as GMN has explicitly encoded the constraints as opposed to EGNN and EGNNReg that learn to remember constraints by training.    


\begin{wrapfigure}[10]{r}{.35\textwidth}
    \centering
    \vskip -0.1in
    \includegraphics[width=.35\textwidth]{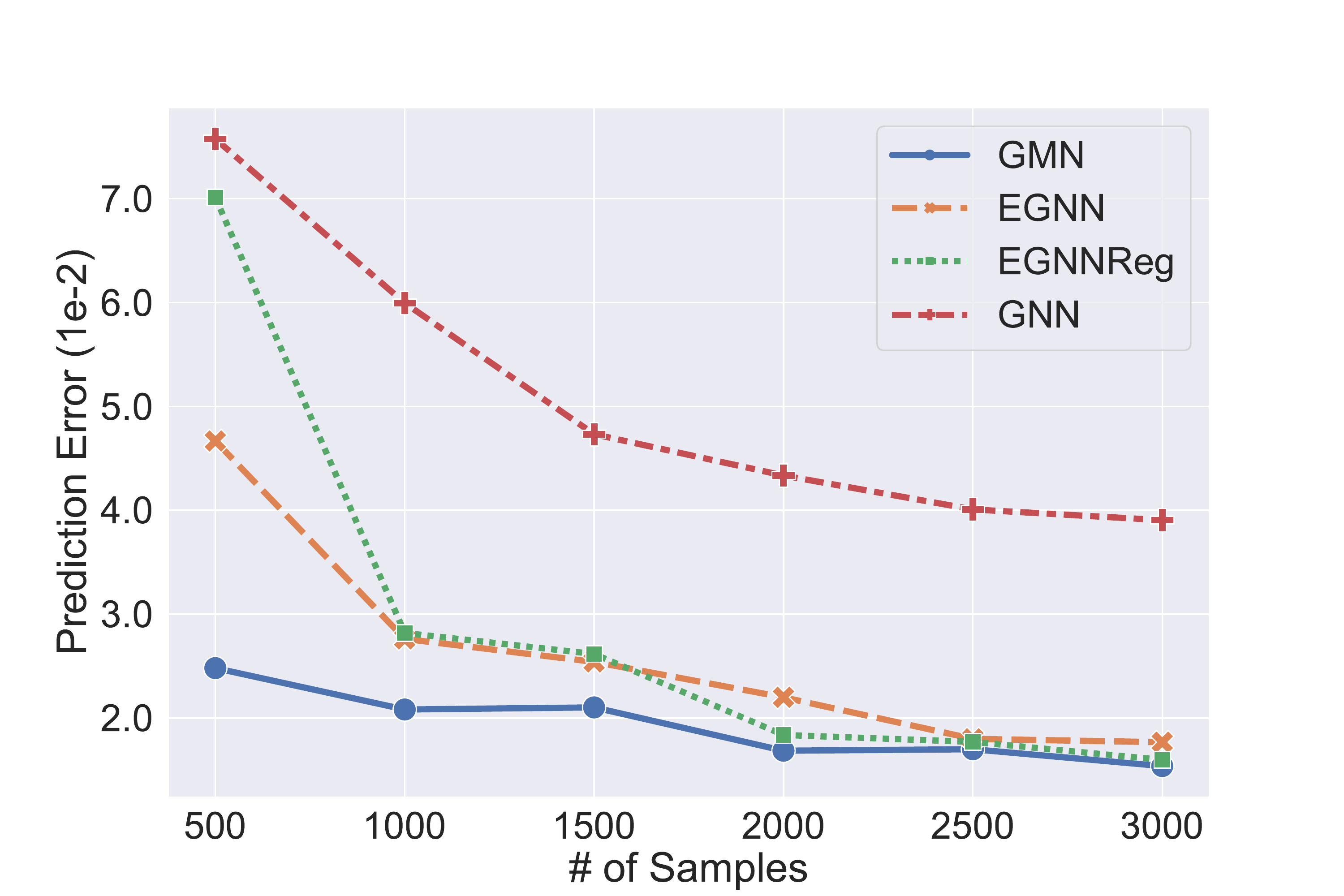}
    \vskip -0.15in
    \caption{Data efficiency.}
    \label{fig:sweep_321}
    \vskip -0.2in
\end{wrapfigure}

\begin{table}[t!]
  \centering
  \vskip -0.2in
  \small
  \caption{Generalization across different systems. All models are trained in the (3,2,1) scenario.}
    \begin{tabular}{lcccc|cccc}
    \toprule
          & \multicolumn{4}{c|}{$|$Train$|$ = 500} & \multicolumn{4}{c}{$|$Train$|$ = 1500} \\
          & 3,2,1 & 2,4,0 & 1,0,3  & Average & 3,2,1 & 2,4,0 & 1,0,3  & Average \\
    \midrule
    GNN  &7.58& 8.06  & 8.37   & 8.00 &4.73 & 4.98  & 5.58    & 5.10\\
    EGNN &4.67 & 3.42  & 4.40  & 4.16 &2.54 & 2.75  & 3.49   & 2.93 \\
    EGNNReg &7.01 & 4.49  & 6.62   & 6.04 &2.62 & 2.62  & 4.29  & 3.18\\
    \midrule
    GMN  & \textbf{2.48} & \textbf{2.53}  & \textbf{3.28}  & \textbf{2.76} & \textbf{2.10} & \textbf{2.18}  & \textbf{2.65} & \textbf{2.31} \\
    \bottomrule
    \end{tabular}%
  \label{tab.generalization}%
  \vskip -0.2in
\end{table}%

\textbf{Data efficiency.}
Fig.~\ref{fig:sweep_321} depicts the prediction errors  on (3,2,1) when the training size varies. It is observed that GMN acts steadily, justifying its benefit in data efficiency. Once again, both EGNN and EGNNReg deliver much worse performance when the training size is small, and they approach GMN when the training dataset is enlarged sufficiently. In physics, it is important to ensure the physics rules that are discovered by a relatively small number of experiments to be general enough for explaining universal phenomena. Hence, data efficiency, as a key advantage of GMN, comes as an important requirement for learning to model physical systems. 
Besides, by the comparison between GNN and other equivariant models, it is seen that equivariance is a crucial point for performance guarantee.

\textbf{The generalization capability across different systems.}
It is interesting to check how the models perform when trained on one system but tested on others. Hence, Table~\ref{tab.generalization} tests the generalization from (3,2,1) to (2,4,0) and (1,0,3). Interestingly, the performances of all models on new systems are comparable with the original environment. We conjecture that this ability could be attributed to the usage of GNN in capturing the combination diversity of the objects. As before, GMN performs best.


\begin{wraptable}{r}{0.6\textwidth}
  \centering
  \setlength{\tabcolsep}{1.2pt}
  \vskip -0.3in
  \small
  \caption{Ablations. ``O.F." denotes numerical over-flow.}
    \begin{tabular}{lcc|cc}
    \toprule
          & \multicolumn{2}{c|}{$|$Train$|$ = 500} & \multicolumn{2}{c}{$|$Train$|$ = 1500} \\
          & 3,2,1 & 5,3,3 & 3,2,1 & 5,3,3 \\
    \midrule
    GMN   & \textbf{2.48}  & \textbf{4.08}  & \textbf{2.10}   & \textbf{2.86} \\
    GMN-L   & 3.19  & 4.34  & 2.28   & 3.03 \\
    w/o Equivariance & 3.74  & 4.41  & 2.46  & 3.29 \\
    $\varphi_2$ with $(\vf_i^{l},\vx_{ki}^{l-1}, \vv_{ki}^{l-1})$ & 2.86  & 4.15  & 2.20   & 3.00 \\
    $\varphi_2$ with $(\vf_i^{l},\vx_{ki}^{l-1}, \vv_{ki}^{l-1})$, shared & 3.86  & 4.25 & 2.30   & 3.10 \\
    $\varphi_2$ with only $\vf_i^l$ & 3.10   & 4.39  & 2.34  & 4.19 \\
    $\varphi_2$ with only $\vf_i^l$, shared & 2.91  & 4.94  & 2.39  & 3.45 \\
    w/o Normalization & 3.15  & O.F.  & O.F.  & O.F. \\
    \bottomrule
    \end{tabular}%
  \label{tab.ablation}%
  \vskip -0.2in
\end{wraptable}%

\textbf{Ablation studies.}
\textbf{1.} The function $\varphi_2$ is designed to be equivariant in Eq.~\ref{eq.genorm}. To justify this necessity, we replace $\varphi_2$ with an MLP of the same size, and report the errors in Table~\ref{tab.ablation}, from which we confirm that removing equivariance incurs detriment to the performance. 
\textbf{2.} 
The default setting of GMN in Eq.~\ref{equ.genaccinf} is leveraging unshared acceleration inference $\varphi_2(\vf_i^l)$ for sticks and $\varphi_2(\vf_i^l,\vx_{ki}^{l-1},\vv_{ki}^{l-1})$ for hinges. Here we investigate different cases by assigning the identical form of $\varphi_2$ for sticks and hinges with shared or unshared parameters. Applying identical $\varphi_2(\vf_i^l,\vx_{ki}^{l-1},\vv_{ki}^{l-1})$ mostly outperforms the case by using $\varphi_2(\vf_i^l)$, probably owing to the better expressivity of the former version. Yet, both cases are worse than our design, implying that the dynamics of sticks and hinges should be modeled distinctly.   
\textbf{3.}
We have introduced a normalization term in Eq.~\ref{eq.genorm}. Table~\ref{tab.ablation} demonstrates that eliminating this term leads to divergence during training, possibly owing to the numerical instability in the forward/backward propagation.  
\textbf{4.} We have also implemented GMN-L that replaces the hand-crafted FK in GMN with a learnable black-box equivariant function.  GMN-L outperforms EGNN in various settings, verifying the validity of using our proposed equivariant message passing layer and the object-level message $\ddot{\vq}_k$ in Eq.~\ref{equ.genaccinf}. Yet, GMN-L still yields minor gap with GMN, implying the benefit of involving domain knowledge. The full results are deferred to Appendix~\ref{sec.learnablefk}. 

\subsection{Applications on real-world datasets}

This subsection introduces how to apply GMN to  practical applications including  MD17~\citep{chmiela2017machine} and CMU Motion Capture~\citep{cmu2003motion}. It is not required to manually derive the entire kinematics for these complex systems; instead, each input system is decomposed as a set of particles and sticks (\emph{e.g.}, the circles in Fig.~\ref{fig.molecules}), which can be directly processed by our current formulation of GMN without any modification. The core is modeling partial length-constraints of the input system with disjoint sticks. The full details of the kinematics decomposition trick are in Appendix~\ref{sec:kinedecompose}. 

\begin{wrapfigure}[10]{r}{0.4\textwidth}
  \begin{center}
  \vskip -0.1in
    \includegraphics[width=0.4\textwidth]{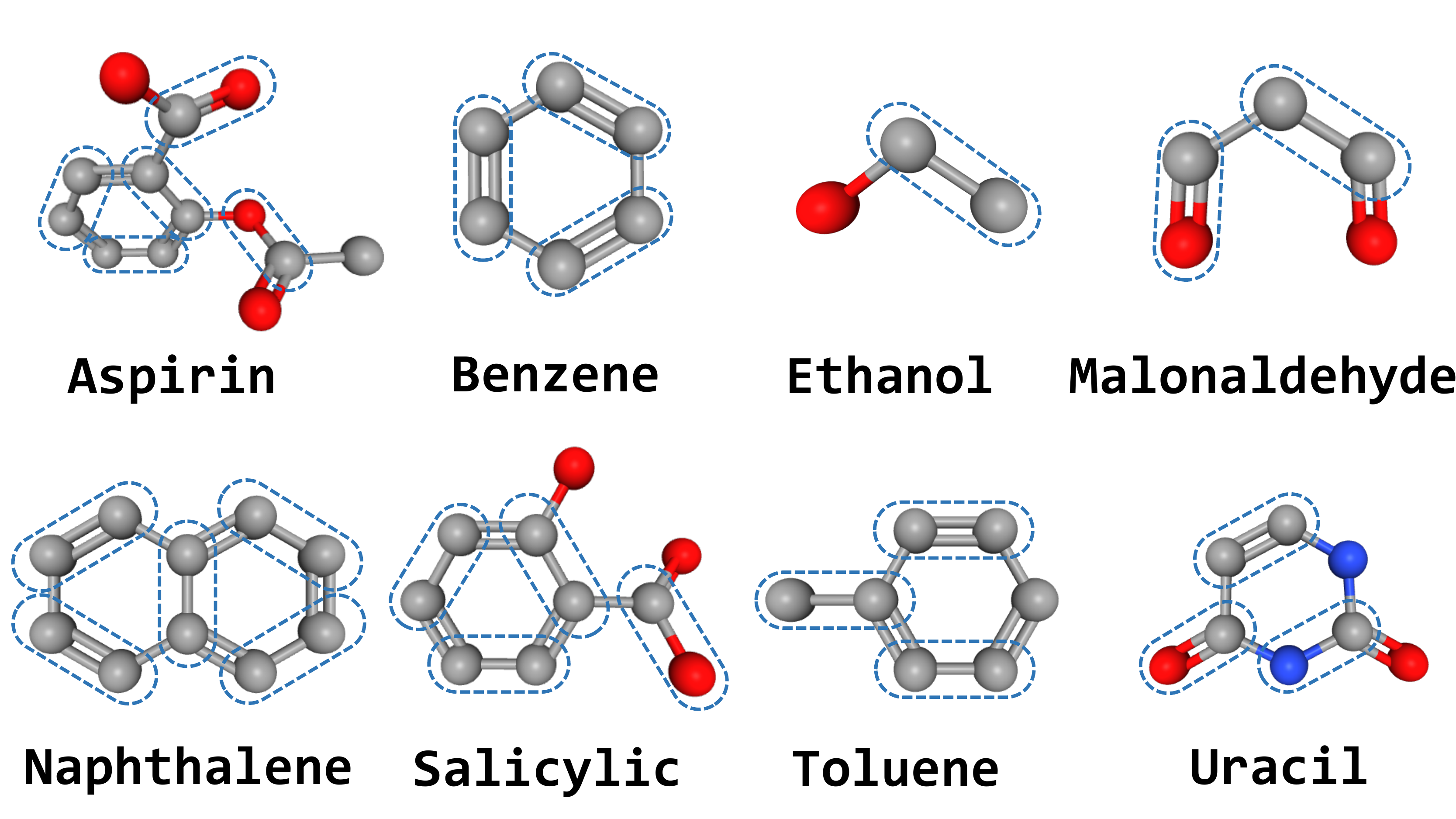}
  \end{center}
  \vskip-0.25in
  \caption{Molecules in MD17.}
  \label{fig.molecules}
  \vskip-0.3in
\end{wrapfigure}
\textbf{MD17.} 
We adopt MD17~\citep{chmiela2017machine} which involves the trajectories of eight molecules generated via molecular dynamics simulation. Our goal here is to predict the future positions of the atoms given the current system state. We observe that the lengths of chemical bonds remain very stable during the simulation, making it reasonable to model the bonds as sticks. 
The complete implementation details are deferred to Appendix~\ref{sec.detailed_exp}. As presented in Table~\ref{tab:md17_}, GMN outperforms other competitive equivariant models on 7 of the 8 molecules. Particularly, on molecules with complex structures (\emph{e.g.}, Aspirin, Benzene, and Salicylic), the improvement of GMN is more significant, showcasing the benefit of constraint modeling on the bonds. Yet, we also observe that the constraint-aware models (GMN and EGNNReg) perform worse than others on Ethanol, possibly because Ethanol is a relatively small molecule with simple structure, where considering the bond constraints possibly makes less benefit but instead hinders the learning.
Surprisingly, GMN-L showcases very competitive performance on this dataset. It surpasses GMN on 4 of the 8 molecules, exhibiting that learnable FK works in practice even it does not involve domain knowledge of the constraint into kinematics modeling.  

\begin{table}[t]
\vskip -0.2in
  \centering
    \setlength{\tabcolsep}{2pt}
  \small
  \caption{Prediction error ($\times 10^{-2}$) on MD17 dataset. Results averaged across 3 runs.}
  \resizebox{0.8\textwidth}{!}{
    \begin{tabular}{lcccccccc}
    \toprule
          & Aspirin & Benzene & Ethanol & Malonaldehyde & Naphthalene & Salicylic & Toluene & Uracil \\
    \midrule
    RF    & 10.94\tiny{$\pm$0.01}   & 103.72\tiny{$\pm$1.29}   & \underline{4.64}\tiny{$\pm$0.01}   & 13.93\tiny{$\pm$0.03}   & 0.50\tiny{$\pm$0.01}   & 1.23\tiny{$\pm$0.01}   & 10.93\tiny{$\pm$0.04}   & \underline{0.64}\tiny{$\pm$0.01}   \\
    TFN   & 12.37\tiny{$\pm$0.18}   & 58.48\tiny{$\pm$1.98}   & 4.81\tiny{$\pm$0.04}   & 13.62\tiny{$\pm$0.08}   & 0.49\tiny{$\pm$0.01}   & 1.03\tiny{$\pm$0.02}   & 10.89\tiny{$\pm$0.01}   & 0.84\tiny{$\pm$0.02}   \\
    SE(3)-Tr. & 11.12\tiny{$\pm$0.06}   & 68.11\tiny{$\pm$0.67}   & 4.74\tiny{$\pm$0.13}   & 13.89\tiny{$\pm$0.02}   & 0.52\tiny{$\pm$0.01}   & 1.13\tiny{$\pm$0.02}   & 10.88\tiny{$\pm$0.06}   & 0.79\tiny{$\pm$0.02}  \\
    EGNN  & 14.41\tiny{$\pm$0.15}  & 62.40\tiny{$\pm$0.53}  & \underline{4.64}\tiny{$\pm$0.01}  & 13.64\tiny{$\pm$0.01}  & 0.47\tiny{$\pm$0.02}  & 1.02\tiny{$\pm$0.02}  & 11.78\tiny{$\pm$0.07}  & \underline{0.64}\tiny{$\pm$0.01}  \\
    EGNNReg & 13.82\tiny{$\pm$0.19}  & 61.68\tiny{$\pm$0.37}  & 6.06\tiny{$\pm$0.01}  & 13.49\tiny{$\pm$0.06}  & 0.63\tiny{$\pm$0.01}  & 1.68\tiny{$\pm$0.01}  & 11.05\tiny{$\pm$0.01}  & 0.66\tiny{$\pm$0.01}  \\
    \midrule
    GMN   & \underline{10.14}\tiny{$\pm$0.03}  & \textbf{48.12}\tiny{$\pm$0.40}  & 4.83\tiny{$\pm$0.01}  & \underline{13.11}\tiny{$\pm$0.03}  & \textbf{0.40}\tiny{$\pm$0.01}  & \underline{0.91}\tiny{$\pm$0.01}  & \textbf{10.22}\tiny{$\pm$0.08}  & \textbf{0.59}\tiny{$\pm$0.01}  \\
    
        GMN-L   & \textbf{9.76}\tiny{$\pm$0.11}  & \underline{54.17}\tiny{$\pm$0.69}  & \textbf{4.63}\tiny{$\pm$0.01}  & \textbf{12.82}\tiny{$\pm$0.03}  & \underline{0.41}\tiny{$\pm$0.01}  & \textbf{0.88}\tiny{$\pm$0.01}  & \underline{10.45}\tiny{$\pm$0.04}  & \textbf{0.59}\tiny{$\pm$0.01}  \\
    \bottomrule
    \end{tabular}%
    }
  \label{tab:md17_}%
  \vskip -0.2in
\end{table}%

\textbf{CMU Motion Capture.} 
We use the motion data from the CMU Motion Capture Database~\citep{cmu2003motion}, which contains the trajectory of human motion in various scenarios. Different parts of the human body could be treated as hard rigid-body constraints. We focus on the walking motion of single object (subject \#35) containing 23 trials, similar to~\cite{kipf2018neural}. 
As depicted in Table~\ref{tab:motion_}, GMN outperforms other models by a large margin, verifying the efficacy of our equivariant constraint module on modeling complex rigid bodies. We further provide a visualization in Fig.~\ref{fig.visualization_body_}, where GMN predicts the motion accurately while EGNN yields larger error. Again, GMN-L, although is inferior to GMN, is better than other methods remarkably.  

\vskip -0.2in
\begin{table}[htbp]
  \centering
  \vskip -0.1in
  \small
  \caption{Prediction error ($\times 10^{-2}$) on motion capture. Results averaged across 3 runs.}
    \begin{tabular}{cccccccc}
    \toprule
    GNN   & TFN   & SE(3)-Tr. & RF    & EGNN  & EGNNReg & GMN & GMN-L\\
    \midrule
    67.3\tiny{$\pm$1.1} & 66.9\tiny{$\pm$2.7} & 60.9\tiny{$\pm$0.9} & 197.0\tiny{$\pm$1.0} & 59.1\tiny{$\pm$2.1} & 59.5\tiny{$\pm$2.2} & \textbf{43.9}\tiny{$\pm$1.1} & \underline{50.9}\tiny{$\pm$0.7} \\
    \bottomrule
    \end{tabular}%
  \label{tab:motion_}%
  \vskip -0.2in
\end{table}%

\begin{figure}[htbp]
    \centering
    \includegraphics[width=0.31\textwidth]{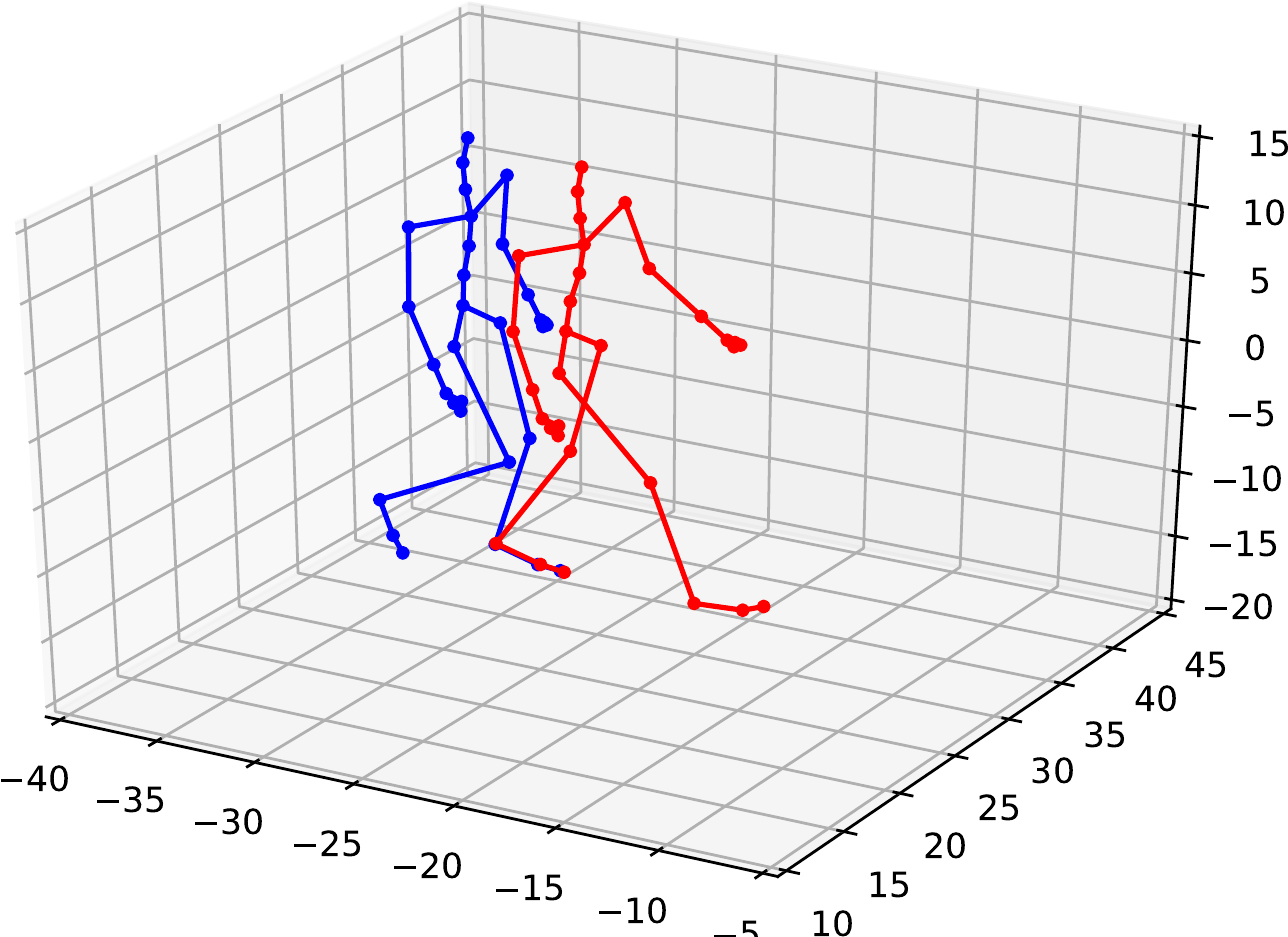}
    \includegraphics[width=0.31\textwidth]{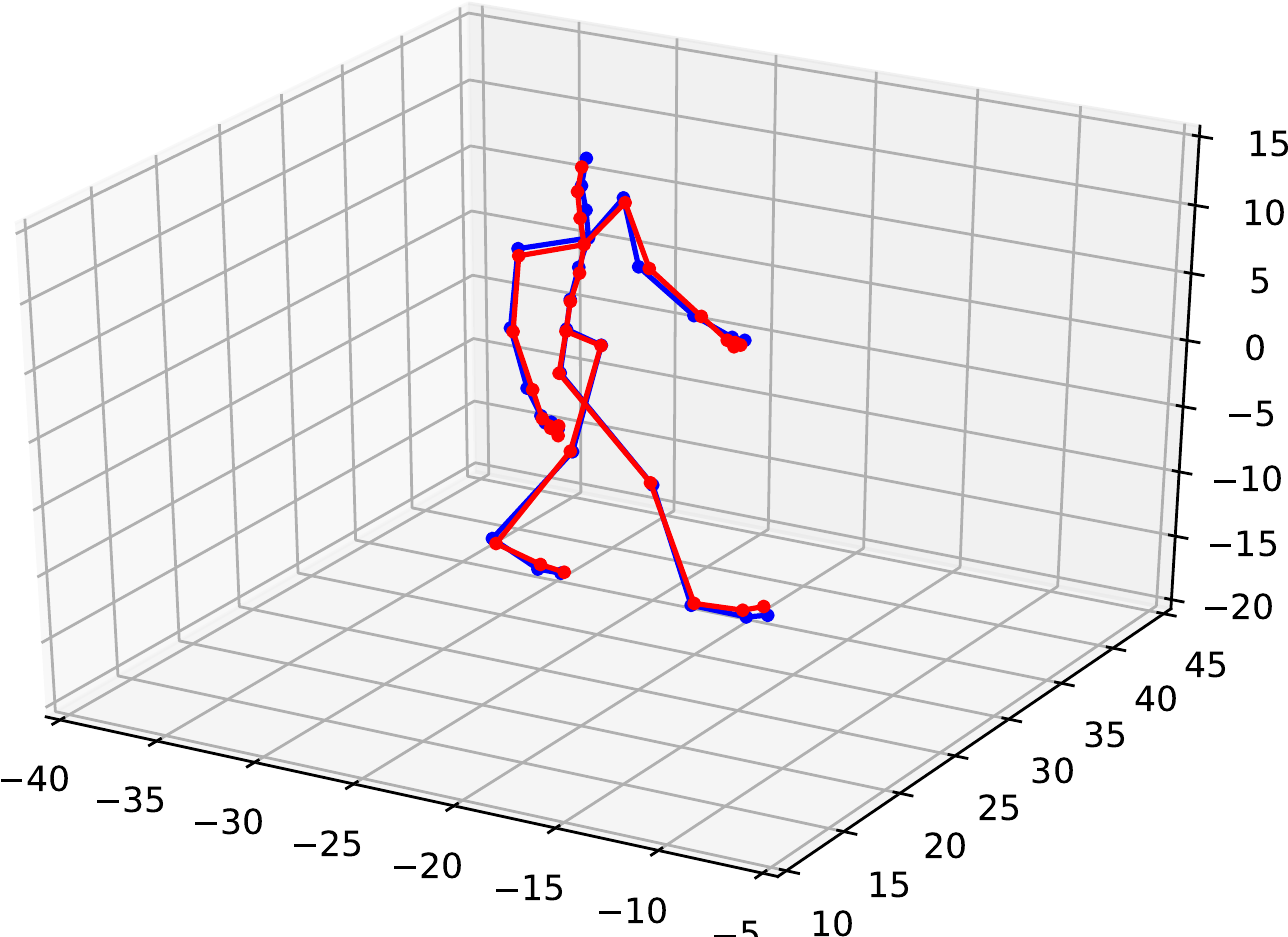}
    \includegraphics[width=0.31\textwidth]{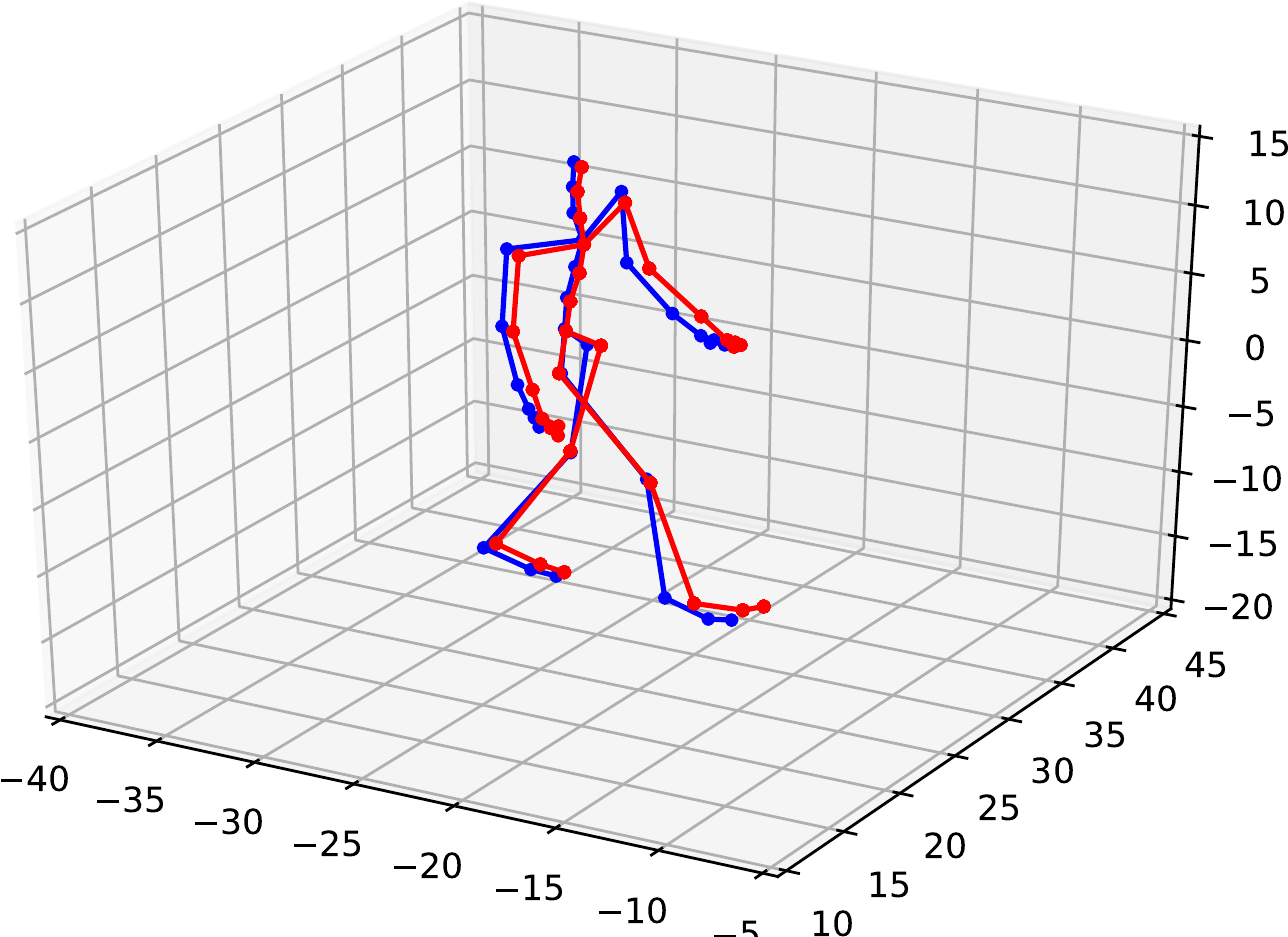}
    \vskip -0.1in
    \caption{Left to Right: initial position, GMN, EGNN (all in {\color{blue} blue}). Ground truths are in {\color{red} red}. }
    \label{fig.visualization_body_}
    \vskip -0.2in
\end{figure}

\section{Conclusion}

In this paper, we propose Graph Mechanics Networks (GMN) that are capable of characterising constrained systems of interacting objects. The core is making use of generalized coordinates, by which the constraints are implicitly and exactly encapsulated in the forward kinematics. To enable Euclidean symmetry, we have developed a general form of equivariant functions to simulate the interaction forces and backward dynamics, whose expressivity is theoretically justified. For the simulated systems with particles, sticks and hinges, GMN outperforms existing methods regarding prediction error, constraint satisfaction and data efficiency.
Moreover, the evaluations on two real-world datasets support the generalization ability of GMN towards complex systems.

\section{Reproducibility Statement}
The complete proof of the theorems is provided in Appendix~\ref{sec.proof}.
The hyper-parameters and other experimental details are provided in Appendix~\ref{sec.detailed_exp}. 

Our code is available at: \url{https://github.com/hanjq17/GMN}.

\section{Ethics Statement}
The research in this paper does NOT involve any human subject, and our dataset is not related to any issue of privacy and can be used publicly. All authors of this paper follow the ICLR Code of Ethics (https://iclr.cc/public/CodeOfEthics). 

\subsubsection*{Acknowledgments}
This work was jointly supported by the following projects: the Scientific Innovation 2030 Major Project for New Generation of AI under Grant NO. 2020AAA0107300, Ministry of Science and Technology of the People's Republic of China; the National Natural Science Foundation of China (No.62006137); Tencent AI Lab Rhino-Bird Visiting Scholars Program (VS2022TEG001); Beijing Academy of Artificial Intelligence (BAAI).

\bibliography{iclr2022_conference}
\bibliographystyle{iclr2022_conference}

\appendix
\section{The proof of Theorem~\ref{Th:universarity}}
\label{sec.proof}

In the following, we define the orthogonal group as $\gO(d)=\{\mO\in\R^{d\times d}\mid \mO^{\top}\mO=\mO\mO^{\top}=\mI_d\}$
Prior to providing the proof for Theorem~\ref{Th:universarity}, we first list two necessary lemmas below.

\setcounter{theorem}{0}

\begin{lemma}
\label{Th:lemma1}
For any two matrices $\mZ_1,\mZ_2\in\R^{d\times m}$, we have this equivalence: $\exists\mO\in\gO(d), \mO\mZ_1=\mZ_2\Leftrightarrow \mZ_1^{\top}\mZ_1=\mZ_2^{\top}\mZ_2$.
\end{lemma}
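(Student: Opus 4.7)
The plan is to prove the two implications separately. The forward direction $(\Rightarrow)$ is a one-liner: if $\mZ_2 = \mO \mZ_1$ with $\mO^{\top}\mO = \mI_d$, then $\mZ_2^{\top}\mZ_2 = \mZ_1^{\top}\mO^{\top}\mO\mZ_1 = \mZ_1^{\top}\mZ_1$. The interesting direction is $(\Leftarrow)$, where from the Gram-matrix equality alone I have to construct an orthogonal $\mO$ that aligns the two matrices.

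For the converse I would use a subspace-isometry argument rather than attempting an explicit formula in terms of SVDs. First I would observe that $\text{rank}(\mZ_1) = \text{rank}(\mZ_1^{\top}\mZ_1) = \text{rank}(\mZ_2^{\top}\mZ_2) = \text{rank}(\mZ_2)$, so the column spaces $\gC_1 := \text{col}(\mZ_1)$ and $\gC_2 := \text{col}(\mZ_2)$ are subspaces of $\R^d$ of equal dimension, and consequently so are the orthogonal complements $\gC_1^{\perp}$ and $\gC_2^{\perp}$. I would then define a linear map $T : \gC_1 \to \gC_2$ by $T(\mZ_1 \vu) := \mZ_2 \vu$ for every $\vu \in \R^m$. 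The key step here, which I expect to be the only genuinely substantive step, is showing that $T$ is well-defined: whenever $\mZ_1 \vu = \mZ_1 \vv$, the Gram identity forces
\begin{align*}
\|\mZ_2(\vu - \vv)\|_2^2 = (\vu - \vv)^{\top}\mZ_2^{\top}\mZ_2(\vu - \vv) = (\vu - \vv)^{\top}\mZ_1^{\top}\mZ_1(\vu - \vv) = \|\mZ_1(\vu - \vv)\|_2^2 = 0,
\end{align*}
so $\mZ_2 \vu = \mZ_2 \vv$. The same identity shows that $T$ preserves inner products on $\gC_1$, hence is a linear isometry onto $\gC_2$.

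Finally, I would extend $T$ to a global orthogonal map on $\R^d$ by letting it act as any linear isometry from $\gC_1^{\perp}$ onto $\gC_2^{\perp}$ (such an isometry exists precisely because of the equal-dimension observation above). Taking $\mO$ to be the matrix of this extended map produces an element of $\gO(d)$ with $\mO \mZ_1 = \mZ_2$, completing the proof. No additional regularity or rank assumptions on $\mZ_1, \mZ_2$ are needed; in particular, the lemma holds even when the full-rank hypothesis of Theorem~\ref{Th:universarity} fails.
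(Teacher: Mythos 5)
Your proof is correct, and it takes a genuinely different route from the paper's. The paper argues via the SVD: it writes $\mZ_1=\mU_1\mS_1\mV_1^{\top}$, asserts that $\mZ_1^{\top}\mZ_1=\mZ_2^{\top}\mZ_2$ forces $\mZ_2$ to admit an SVD $\mZ_2=\mU_2\mS_1\mV_1^{\top}$ with the \emph{same} singular values and right-singular vectors, and then takes $\mO=\mU_2\mU_1^{\top}$. Your argument instead builds the orthogonal map directly: the Gram identity makes $T(\mZ_1\vu)\coloneqq\mZ_2\vu$ well defined and inner-product preserving on $\mathrm{col}(\mZ_1)$, and the rank equality $\mathrm{rank}(\mZ_i)=\mathrm{rank}(\mZ_i^{\top}\mZ_i)$ lets you extend by an arbitrary isometry between the orthogonal complements. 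What your approach buys is transparency in the degenerate cases: the paper's claim that the two matrices share a right-singular factor is slightly delicate when singular values repeat or vanish (the completion of $\mU_2$ to an orthogonal matrix in the rank-deficient case is exactly the complement-extension step you carry out explicitly), so your version is the more airtight of the two and makes it evident that no rank hypothesis is needed. What the paper's version buys is economy and continuity with the rest of the appendix, since the factors $\mS_{\mZ}$ and $\mV_{\mZ}$ introduced there are reused in the proof of Theorem~\ref{Th:universarity} and Corollary~\ref{th:cor2}. Both proofs establish the same equivalence.
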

\begin{proof}
We only need to prove the ``$\Leftarrow$" direction, as ``$\Rightarrow$" holds clearly. Suppose the SVD decomposition of $\mZ_1$ as $\mZ_1=\mU_1\mS_1\mV_1^{\top}$ with the left-singular matrix $\mU_1\in\gO(d)$, the singular diagonal matrix $\mS_1\in\R^{d\times m}$, and the right-singular matrix $\mV_1\in\gO(m)$. Since $\mZ_1^{\top}\mZ_1=\mZ_2^{\top}\mZ_2$, then there must exists a certain SVD decomposition of $\mZ_2$ that shares the same singular matrix and right-singular matrix with $\mZ_1$, implying that $\mZ_2=\mU_2\mS_1\mV_1^{\top}$, where $\mU_2\in\gO(d)$. Hence, $\mZ_2=\mU_2\mS_1\mV_1^{\top}=\mU_2\mU_1^{\top}\mU_1\mS_1\mV_1^{\top}=\mU_2\mU_1^{\top}\mZ_1$, which concludes the proof owing to the orthogonality of $\mU_2\mU_1^{\top}$.
\end{proof}

\begin{lemma}
\label{Th:lemma2}
The function $f:\R^{d\times m}\rightarrow\R^{m'}$ is invariant on $\gO(d)$, namely, $f(\mO\mZ)=f(\mZ), \forall\mO\in\gO(d), \forall\mZ\in\R^{d\times m}$, if and only if there exists function $g:\R^{m\times m}\rightarrow\R^{m'}$ satisfying $f(\mZ)=g(\mZ^{\top}\mZ)$.
\end{lemma}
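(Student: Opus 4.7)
}
The plan is to prove both directions separately, leveraging Lemma~\ref{Th:lemma1} for the nontrivial ``only if'' direction. The ``if'' direction is essentially a one-line computation: assuming $f(\mZ)=g(\mZ^{\top}\mZ)$, for any $\mO\in\gO(d)$ we have $(\mO\mZ)^{\top}(\mO\mZ)=\mZ^{\top}\mO^{\top}\mO\mZ=\mZ^{\top}\mZ$, so $f(\mO\mZ)=g((\mO\mZ)^{\top}(\mO\mZ))=g(\mZ^{\top}\mZ)=f(\mZ)$, which is the desired $\gO(d)$-invariance.

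For the ``only if'' direction, the idea is to use the invariant $\mZ^{\top}\mZ$ as a complete separator of orbits and simply define $g$ on these orbits. Concretely, I would let $\gS\subseteq\R^{m\times m}$ denote the image of the map $\mZ\mapsto\mZ^{\top}\mZ$ (this is the cone of symmetric positive semi-definite matrices of rank at most $d$), and define $g:\R^{m\times m}\to\R^{m'}$ by setting $g(\mA)\coloneqq f(\mZ_{\mA})$ for any fixed choice of preimage $\mZ_{\mA}$ with $\mZ_{\mA}^{\top}\mZ_{\mA}=\mA$ when $\mA\in\gS$, and by any arbitrary value on $\R^{m\times m}\setminus\gS$. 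To verify $f(\mZ)=g(\mZ^{\top}\mZ)$ for every $\mZ$, observe that $\mZ$ and the chosen representative $\mZ_{\mZ^{\top}\mZ}$ share the same Gram matrix, so by Lemma~\ref{Th:lemma1} there exists $\mO\in\gO(d)$ with $\mO\mZ_{\mZ^{\top}\mZ}=\mZ$; the $\gO(d)$-invariance of $f$ then yields $f(\mZ)=f(\mZ_{\mZ^{\top}\mZ})=g(\mZ^{\top}\mZ)$.

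The main subtlety I expect is well-definedness of $g$, i.e.\ the fact that $f$ is constant on each orbit of the left $\gO(d)$-action on $\R^{d\times m}$. This is precisely what Lemma~\ref{Th:lemma1} gives: two matrices with equal Gram matrices lie in a common $\gO(d)$-orbit, so invariance of $f$ forces them to have the same value. A minor secondary point is that the lemma does not request any continuity or measurability of $g$, so the ``arbitrary extension off $\gS$'' trick is harmless; otherwise one would need to argue that the orbit map $\mZ\mapsto\mZ^{\top}\mZ$ admits a continuous section on $\gS$, which is standard but not needed here. With well-definedness in hand, both implications close and the proof is complete.
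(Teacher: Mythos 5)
Your proposal is correct and follows essentially the same route as the paper: both directions hinge on Lemma~\ref{Th:lemma1} establishing that the Gram matrix $\mZ^{\top}\mZ$ is a complete invariant of the left $\gO(d)$-orbits, so that $f$ factors through the map $\mZ\mapsto\mZ^{\top}\mZ$. Your version is if anything slightly more careful about well-definedness and the extension of $g$ off the image of the Gram map, but the underlying argument is the same.
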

\begin{proof}
The sufficiency is obvious. We now prove the necessity. We define the equivalence class $[\mZ_0]=\{\mZ\mid \exists\mO\in\gO(d), \mO\mZ=\mZ_0\}$. Since $f$ is invariant to the orthogonal transformation, it means $f$ is actually a function on the equivalence class, \emph{i.e.}, $f(\mZ)=f([\mZ])$. On the other hand, according to Lemma~\ref{Th:lemma1}, we have $[\mZ_1]=[\mZ_2]\Leftrightarrow \mZ_1^{\top}\mZ_1=\mZ_2^{\top}\mZ_2$, which implies the one-to-one correspondence between $[\mZ]$ and $\mZ^{\top}\mZ$; hence, there must exist a function $f'$ leading to $[\mZ]=f'(\mZ^{\top}\mZ)$, and $f'$ is continuous in terms of any invariant metric such as the norm $\|\mZ^{\top}\mZ\|$. Overall, $f(\mZ)=f([\mZ])=f(f'(\mZ^{\top}\mZ))\coloneqq g(\mZ^{\top}\mZ)$.
\end{proof}

We are now ready to prove Theorem~\ref{Th:universarity} that is copied below for better readability.
\setcounter{theorem}{0}
\begin{theorem}
\label{Th:universarity}
If $m\geq d$ and the row rank of $\mZ$ is full, \emph{i.e.} $\text{rank}(\mZ)=d$, then for any continuous orthogonality-equivariant function $\hat{\varphi}(\mZ, h)$, there must exist an MLP $\sigma_{w}$ satisfying $\|\varphi(\mZ, h)-\hat{\varphi}(\mZ, h)\|<\epsilon$ for arbitrarily small error $\epsilon$. 
\end{theorem}
\begin{proof}
Without loss of generality, we will omit the non-vector term $h$, which does not change the story but let our proof more concise.   
Because $\mZ$ is of full row-rank, the columns of an arbitrary function $\hat{\varphi}(\mZ)$ can be represented as a linear combination of the columns of $\mZ$; in other words, there must exist a function $\pi:\R^{d\times m}\rightarrow\R^{m\times m'}$ giving rise to $\hat{\varphi}(\mZ)=\mZ\pi(\mZ)$. 
Considering the orthogonality-equivariance, we derive the property of $\pi(\mZ)$ as:
\begin{align}
\nonumber
\hat{\varphi}(\mO\mZ)&=\mO\hat{\varphi}(\mZ), \\
\nonumber
\Rightarrow \mO\mZ\pi(\mO\mZ) &= \mO\mZ\pi(\mZ), \\
\Rightarrow \mZ\pi(\mO\mZ) &= \mZ\pi(\mZ). \label{eq.pi1}
\end{align}
Since $d\leq m$ and the row-rank of $\mZ$ is full, we perform the compact SVD decomposition on $\mZ$, namely, $\mZ=\mU_{\mZ}\mS_{\mZ}\mV^{\top}_{\mZ}$, where $\mU_\mZ\in\gO(d)$, $\R^{d\times d}\ni\mS_\mZ>0$, and $\mV_\mZ\in\R^{m\times d}$ satisfying $\mV_\mZ^{\top}\mV_\mZ=\mI_d$. Eq.~\ref{eq.pi1} becomes:
\begin{align}
\nonumber
\mZ\pi(\mO\mZ) &= \mZ\pi(\mZ), \\
\nonumber
\Rightarrow \mU_{\mZ}\mS_{\mZ}\mV^{\top}_{\mZ}\pi(\mO\mZ) &= \mU_{\mZ}\mS_{\mZ}\mV^{\top}_{\mZ}\pi(\mZ), \\
\Rightarrow \mS_{\mZ}\mV^{\top}_{\mZ}\pi(\mO\mZ) &= \mS_{\mZ}\mV^{\top}_{\mZ}\pi(\mZ).
\label{eq.pi2}
\end{align}
Given that $\mS_{\mZ}=\text{Eigen}(\mZ^{\top}\mZ)$ and $\mV_{\mZ}=\text{EigenVector}(\mZ^{\top}\mZ)$ are respectively the eigenvalues and eigenvectors of $\mZ^{\top}\mZ$ and are clearly invariant to the orthogonal transformation of $\mZ$. Their values 
can be numerically approximated by iterative programs, such as the power method~\citep{mises1929praktische}, thus can be treated as the continuous functions of $\mZ^{\top}\mZ$. Let us define $g'(\mZ)\coloneqq \mS_{\mZ}\mV^{\top}_{\mZ}\pi(\mZ)$. Then, we keep deriving Eq.~\ref{eq.pi2} by:
\begin{align}
\nonumber
\mS_{\mZ}\mV^{\top}_{\mZ}\pi(\mO\mZ) &= \mS_{\mZ}\mV^{\top}_{\mZ}\pi(\mZ), \\
\nonumber
\Rightarrow \mS_{\mO\mZ}\mV^{\top}_{\mO\mZ}\pi(\mO\mZ) &= \mS_{\mZ}\mV^{\top}_{\mZ}\pi(\mZ), \\
\Rightarrow  g'(\mO\mZ) &= g'(\mZ).
\label{eq.pi3}
\end{align}
According to Lemma~\ref{Th:lemma2}, the function $g$ satisfies Eq.~\ref{eq.pi3} if and only if it is written as $g'(\mZ)=g(\mZ^{\top}\mZ)$ for a certain function $g$. By checking the formulation of $\hat{\varphi}(\mZ)$ as demonstrated before, we arrive at
\begin{align}
\nonumber
\hat{\varphi}(\mZ)&=\mZ\pi(\mZ), \\
\nonumber
&= \mU_{\mZ}\mS_{\mZ}\mV^{\top}_{\mZ}\pi(\mZ), \\
\nonumber
&= \mU_{\mZ}g'(\mZ), \\
\nonumber
&= \mU_{\mZ}g(\mZ^{\top}\mZ),\\
\nonumber
&= \mU_{\mZ}\mS_{\mZ}\mV_{\mZ}^{\top}\mV_{\mZ}\mS_{\mZ}^{-1}g(\mZ^{\top}\mZ), \\
\nonumber
&= \mZ\mV_{\mZ}\mS_{\mZ}^{-1}g(\mZ^{\top}\mZ), \\
&\coloneqq \mZ \eta(\mZ^{\top}\mZ). \label{eq.pi4}
\end{align}
Here the function $\eta$ can be approximated by MLP whose universality has been justified by~\citep{cybenko1989approximation,hornik1991approximation}. The conclusion of Theorem~\ref{Th:universarity} is proved.

\end{proof}

\setcounter{corollary}{0}
\begin{corollary}
Assume $m\geq d$, and also the entries of $\mZ$ are drawn independently from a distribution that is absolutely continuous with respect to the Lebesgue measure in $\R$. Then, almost surely, the conclusion of Theorem~\ref{Th:universarity} holds.
\end{corollary}
\begin{proof}
This is straightforward. When $\text{rank}(\mZ)< d$, the columns of $\mZ$ are located in a subspace of $\R^{d}$ (for example a line or a plane in the 3D space), whose measure is zero. Therefore, the probability for making $\text{rank}(\mZ)=d$ is 1, and we almost surely have the same conclusion as Theorem~\ref{Th:universarity}.
\end{proof}

\begin{corollary}
\label{th:cor2}
For any continuous orthogonality-equivariant function $\hat{\varphi}(\mZ,h)$ whose output is located in the linear subspace expanded by the columns of $\mZ$, the conclusion of Theorem~\ref{Th:universarity} holds universally. 
\end{corollary}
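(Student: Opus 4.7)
The plan is to replay the proof of Theorem~\ref{Th:universarity} almost verbatim, with the full-row-rank hypothesis replaced by the strictly weaker column-span hypothesis of the corollary. Observe that in Theorem~\ref{Th:universarity}, the condition $\text{rank}(\mZ)=d$ was invoked only at the very first step: to justify writing $\hat{\varphi}(\mZ,h)=\mZ\pi(\mZ,h)$ for some $\pi$. Here that representation is handed to us directly by hypothesis, since every column of $\hat{\varphi}(\mZ,h)$ sits in the column span of $\mZ$.

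From there I would follow the same chain as Eq.~\ref{eq.pi1}--\ref{eq.pi4}. Orthogonality-equivariance immediately gives $\mZ\bigl(\pi(\mO\mZ,h)-\pi(\mZ,h)\bigr)=\vzero$, i.e., each column of the difference lies in $\ker(\mZ)$. To pin $\pi$ down uniquely, my plan is to select $\pi(\mZ,h)\coloneqq\mZ^{+}\hat{\varphi}(\mZ,h)$ via the Moore-Penrose pseudo-inverse, so that every column of $\pi(\mZ,h)$ sits in the row space $\ker(\mZ)^{\perp}$. Since $\ker(\mO\mZ)=\ker(\mZ)$, the same containment holds for $\pi(\mO\mZ,h)$, and the difference therefore lies in $\ker(\mZ)\cap\ker(\mZ)^{\perp}=\{\vzero\}$. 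Hence $\pi(\mO\mZ,h)=\pi(\mZ,h)$, so $\pi$ is $\gO(d)$-invariant, and Lemma~\ref{Th:lemma2} supplies a continuous $\eta$ with $\pi(\mZ,h)=\eta(\mZ^{\top}\mZ,h)$. Assembling, $\hat{\varphi}(\mZ,h)=\mZ\,\eta(\mZ^{\top}\mZ,h)$, which is precisely the architecture of Eq.~\ref{eq.ge}; the universal approximation property~\citep{cybenko1989approximation,hornik1991approximation} then yields an MLP $\sigma_{w}$ approximating $\eta$ to within any $\epsilon$ on any compact input domain.

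The main obstacle I anticipate is the continuous dependence of $\mZ\mapsto\mZ^{+}$, which can break on rank-deficient strata. My fix is to carry out the argument on the open dense subset of $\R^{d\times m}$ where $\mZ$ attains its maximum possible rank $\min(d,m)$; there the pseudo-inverse, and hence $\eta$, varies continuously in $\mZ^{\top}\mZ$. Because $\hat{\varphi}$ is globally continuous by hypothesis, the identity $\hat{\varphi}(\mZ,h)=\mZ\,\eta(\mZ^{\top}\mZ,h)$ extends by density to the entire space, which is all that the MLP approximation theorem requires on the compact domain used for training. A fallback would be to stratify by rank and invoke the compact-SVD manipulation of Theorem~\ref{Th:universarity} within each rank class, but the pseudo-inverse route is cleaner and avoids continuity issues in the SVD factors themselves.
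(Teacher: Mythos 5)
Your proof is correct and takes essentially the same route as the paper's: use the column-span hypothesis to write $\hat{\varphi}(\mZ,h)=\mZ\pi(\mZ,h)$, show the (minimum-norm) coefficient map is $\gO(d)$-invariant, and conclude via Lemma~\ref{Th:lemma2} and MLP universality — your $\pi(\mZ,h)=\mZ^{+}\hat{\varphi}(\mZ,h)$ is exactly the paper's $\mV_{\mZ}\begin{pmatrix}\mS_{+}^{-1}, \mathbf{0}\end{pmatrix}\mU_{\mZ}^{\top}\hat{\varphi}(\mZ,h)$ written compactly. The pseudo-inverse packaging is a purely presentational improvement that avoids the SVD-factor bookkeeping and makes explicit the maximal-rank continuity caveat that the paper leaves implicit in writing $\mS_{+}>0$.
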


\begin{proof}
According to the definition of $\hat{\varphi}(\mZ)$, we still obtain $\hat{\varphi}(\mZ)=\mZ\pi(\mZ)$. Let us assume $d>m$ (otherwise we can directly obtain Theorem~\ref{Th:universarity}), then the full SVD decomposition of $\mZ$ is $\mZ=\mU_\mZ\mS_\mZ\mV^{\top}_\mZ$, with $\mU_\mZ\in\gO(d)$, $\mS_\mZ\in\R^{d\times m}$, and $\mV_\mZ\in\gO(m)$. But here, $\mS_\mZ$ is not strictly positive. Suppose $\mS_\mZ=\begin{pmatrix}
\mS_{+} \\
\mathbf{0} 
\end{pmatrix}$, where $\mS_{+}>0$. 
We retain that $\mS_{\mZ}\mV^{\top}_{\mZ}\pi(\mZ)=g(\mZ^{\top}\mZ)$ by imitating the proof in Eq.~\ref{eq.pi1}-\ref{eq.pi3}. Analogous to Eq.~\ref{eq.pi4}, we derive,
\begin{align}
\nonumber
\hat{\varphi}(\mZ)&=\mZ\pi(\mZ), \\
\nonumber
&= \mU_{\mZ}g(\mZ^{\top}\mZ),\\
\nonumber
&= \mU_{\mZ}\mS_{\mZ}\mV_{\mZ}^{\top}\mV_{\mZ}
\begin{pmatrix}
\mS_{+}^{-1}, \mathbf{0} 
\end{pmatrix}
g(\mZ^{\top}\mZ), \\
\nonumber
&= \mZ\mV_{\mZ}
\begin{pmatrix}
\mS_{+}^{-1}, \mathbf{0} 
\end{pmatrix}
g(\mZ^{\top}\mZ), \\
&\coloneqq \mZ \eta(\mZ^{\top}\mZ),
\end{align}
which concludes the proof.

\end{proof}

\section{The dynamics analyses for sticks and hinges}
\label{sec.dynamics}

The analytical forms of the dynamics for sticks and hinges can be found from a mechanics book. Here we derive the formulas on our own to make our paper more self-contained. For simplicity, we consider all particles to be of the equal mass and the sticks of no mass.

\paragraph{Dynamics analysis of sticks.}
In Fig.~\ref{fig.dynamics} (Left), we assume the forces acting on particles 1 and 2 are separately $\vf_1$ and $\vf_2$. By following the theorem of the motion of the center of mass, the acceleration of the center is given by
\begin{align}
    \ddot\vq_0 = \frac{\vf_1+\vf_2}{m}.
\end{align}
The rotation accelerations of particles 1 and 2 around the center are calculated by 
\begin{align}
    \ddot{\bm{\theta}}_{01}=\ddot{\bm{\theta}}_{02} = \frac{\mM}{J}=\frac{\vx_{01}\times\vf_1+\vx_{02}\times\vf_2}{m\|\vx_{01}\|^2+m\|\vx_{02}\|^2}, \label{eq.theta}
\end{align}
where $\mM$ defines the total torque and $J$ is the moments of inertia.


\begin{figure}
    \centering
    \includegraphics[width=.3\textwidth]{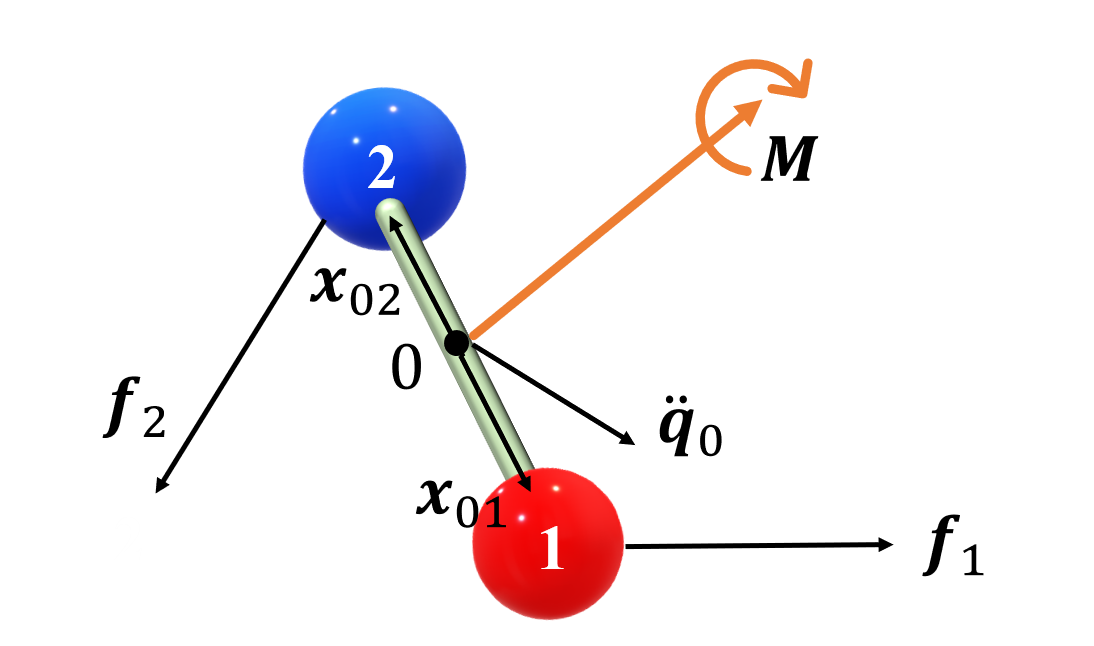}
    \includegraphics[width=.3\textwidth]{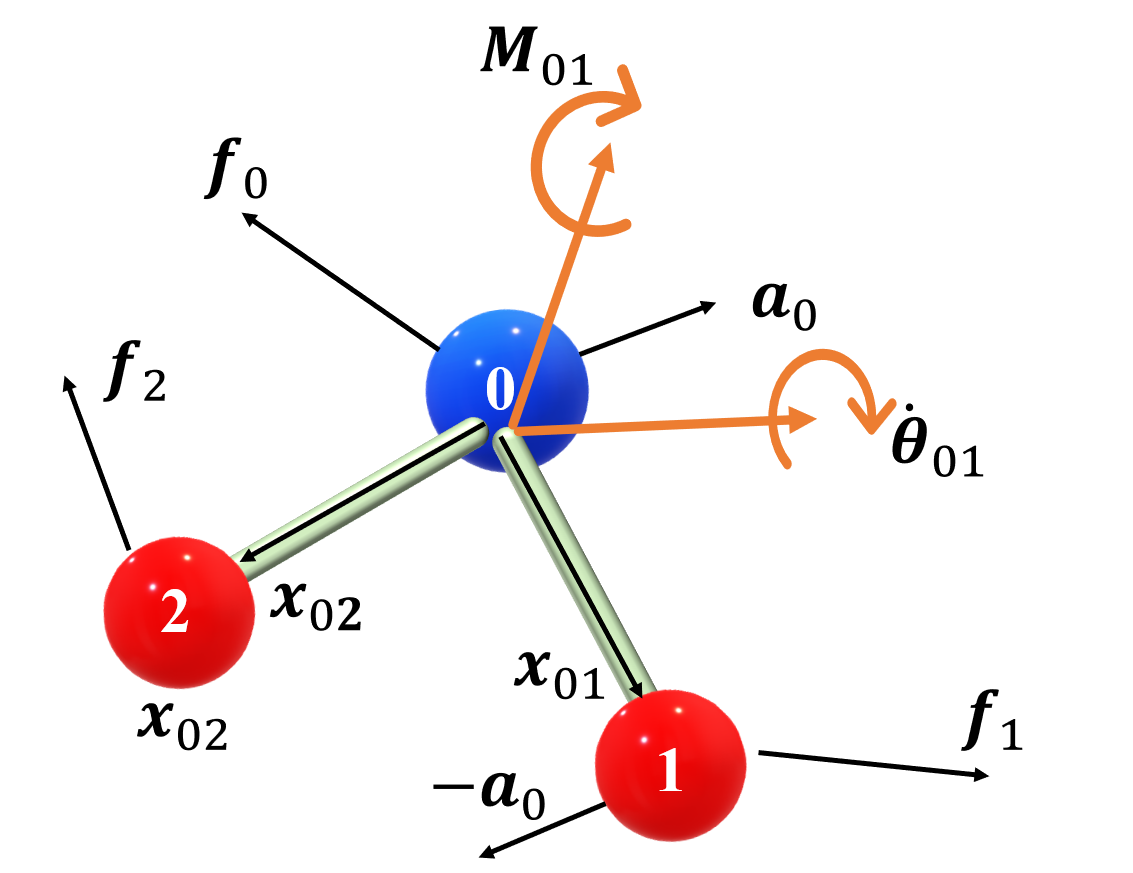}
    \caption{Left: Dynamics of sticks. Right: Dynamics of hinges.}
    \label{fig.dynamics}
\end{figure}

\paragraph{Dynamics analysis of hinges.}
The analysis for hinges is more complicated than sticks. In Fig.~\ref{fig.dynamics} (Right), the forces on particles 0, 1 and 2 are $\vf_0$, $\vf_1$ and $\vf_2$. By using Newton's second law, 
\begin{align}
    \va_0 + \va_1 + \va_2 = \frac{\vf}{m}, \label{eq.force}
\end{align}
where the aggregated force is $\vf=\vf_0+\vf_1+\vf_2$.
In addition, the kinematics relations between the three particles show that
\begin{align}
    \va_1 &= \va_0 + \ddot{\bm{\theta}}_{01}\times\vx_{01} + \dot{\bm{\theta}}_{01}\times\bm{\nu_{01}}, \label{eq.a1}\\
    \va_2 &= \va_0 + \ddot{\bm{\theta}}_{02}\times\vx_{02} + \dot{\bm{\theta}}_{02}\times\bm{\nu_{02}},  \label{eq.a2}
\end{align}
where $\dot{\bm{\theta}}_{01}$ and $\ddot{\bm{\theta}}_{01}$ denote the speed and acceleration of the rotation angle of particle 1 around 0, and $\bm{\nu_{01}}$ is the corresponding linear velocity; the symbols $\dot{\bm{\theta}}_{02}$, $\ddot{\bm{\theta}}_{02}$ and $\bm{\nu_{02}}$ are defined similarly. 
Moreover, the relative rotation acceleration of particle 1 to 0 is caused by the external force $\vf_1$ and the inertia force $-m\va_0$ acting on 1, which derives that
\begin{align}
    \ddot{\bm{\theta}}_{01} &= \frac{\mM_{01}}{J_{01}} = \frac{\vx_{01}\times(\vf_1-m\va_0)}{m\|\vx_{01}\|^2}.  \label{eq.theta1}
\end{align}
Analogously,
\begin{align}
    \ddot{\bm{\theta}}_{02} &= \frac{\mM_{02}}{J_{02}} = \frac{\vx_{02}\times(\vf_2-m\va_0)}{m\|\vx_{02}\|^2}. \label{eq.theta2}
\end{align}
After substituting Eq.~\ref{eq.theta1} into Eq.~\ref{eq.a1}, and Eq.~\ref{eq.theta2} into Eq.~\ref{eq.a2}, and then rearranging Eq.~\ref{eq.force}, we have 
\begin{align}
\va_0 = (\mI+\ve_{01}\ve_{01}^{\top}+\ve_{02}\ve_{02}^{\top})^{-1}\va, \label{eq.a0}
\end{align}
where $\va=\frac{\vf}{m}-\dot{\bm{\theta}}_{01}\times\bm{\nu_{01}}-\dot{\bm{\theta}}_{02}\times\bm{\nu_{02}}-(\mI-\ve_{01}\ve_{01}^{\top})\frac{\vf_1}{m}-(\mI-\ve_{02}\ve_{02}^{\top})\frac{\vf_2}{m}$ with $\ve_{01}$ and $\ve_{02}$ represent the unit vectors along $\vx_{01}$ and $\vx_{02}$, respectively. We have applied a trick to simplify the derivation by observing $\frac{\vx_{01}\times\vf_1\times\vx_{01}}{\|\vx_{01}\|^2}=(\mI-\ve_{01}\ve_{01}^{\top})\vf_1$. Besides, $\mI+\ve_{01}\ve_{01}^{\top}+\ve_{02}\ve_{02}^{\top}$ is invertible, hence Eq.~\ref{eq.a0} is always meaningful.

Substituting Eq.~\ref{eq.a0} back into Eq.~\ref{eq.theta1} and Eq.~\ref{eq.theta2} derives the values of $\ddot{\bm{\theta}}_{01}$ and $\ddot{\bm{\theta}}_{02}$.
Note that in~\textsection~\ref{sec.implement}, we employ the denotation of generalized coordinates by $\vq_0$, $\dot{\vq}_0$ and $\ddot{\vq}_0$ which indeed share the same meaning with the Cartesian coordinates $\vx_0$, $\vv_0$ and $\va_0$ of particle 0. 

\section{Kinematics decomposition}
\label{sec:kinedecompose}

We manually build the forward kinematics (Eq.~\ref{eq.fk}) of the stick (and hinge), which relies on the domain knowledge of the underlying physics. However, for complex systems, we no longer require to derive the kinematics of the entire system. Instead, we propose kinematics decomposition, a simple yet effective trick that can decompose each input system (an arbitrary graph) into particles and sticks. Taking the MD17 dataset as an example, for each molecule, we select certain bonds as sticks (the circles in Fig.~\ref{fig.molecules}) and the remaining atoms as isolated particles; in this way, we obtain a set of particles and sticks. Note that different sticks are not allowed to intersect; otherwise, it will generate two values for the intersecting particle of two sticks and cause ambiguity if these two values are distinct. Although this kind of kinematics decomposition will only maintain partial constraints, it greatly enlarges the application scope of our current formulation Eq. (\ref{equ.interactionforce}-\ref{eq.fk}) without any revision. More importantly, our experiments verify that GMN by this formulation is sufficient to surpass other methods on complex systems like molecules. On CMU Motion Capture, since the motion graph contains no circle and is of the tree-like structure, it is tractable to derive the exact forward kinematics by recursive kinematics computation from the root node. Yet, we still encourage the usage of the above kinematics decomposition for its easy implementation and compatibility with our GMN. 

\section{Full algorithmic details}
In the main body of the paper, for better readability, we first introduce the general pipeline of our method in \textsection~\ref{sec.model} and then present the implementation details by taking the angels into account in \textsection~\ref{sec.implement}. Here, we combine them into one singe algorithmic flowchart in Alg.~\ref{alg:example_simple}.

\begin{algorithm}[t!]
  \caption{Graph Mechanics Networks (GMNs)}
  \label{alg:example_simple}
  \begin{algorithmic}
  \STATE {\bfseries Input:} Initial states of all particles $\{\mS_i^0=(\vx_i^0,\vv_i^0)\}_{i=1}^N$ and features $\{h_i^0\}_i^N$;  Learnable equivariant functions $\varphi_1, \varphi_2, \psi, \psi'$; Layer number $L$.
  \STATE Compute the generalized coordinates of all structural objects  $\{(\vq_k^0,\dot{\vq}_k^0,\{\dot{\bm{\theta}}_{ki}^0\}_{i\in\gO_k})\}_{k=1}^K$.
  \FOR{layer $l=1$ to $L$}
  \FOR{particle $i=1$ to $N$}
  \STATE Calculate the interaction force $\vf_i^{l}$ and feature $h_i^{l}$ for each particle by:
  \begin{align}
  \vf_i^{l}, h_i^{l}&=\sum_{j=1}^N \varphi_1(\vx_{ji}^{l-1},h_i^{l-1}, h_j^{l-1}, e_{ji}).
  \end{align}
  \ENDFOR
  \FOR{object $k=1$ to $K$}
  \STATE Inference the generalized Cartesian acceleration $\ddot{\vq}_k^{l}$ via:
  \begin{align}
        \ddot{\vq}_k^{l} &=\sum_{i\in\gO_k} \varphi_2(\vf_i^{l}), \quad\text{(for sticks)} \\
      \ddot{\vq}_k^{l} &=\sum_{i\in\gO_k} \varphi_2(\vf_i^{l},\vx_{ki}^{l-1}, \vv_{ki}^{l-1}). \quad\text{(for hinges)}
  \end{align}
  \STATE Derive the generalized angle acceleration:
  \begin{align}
    \ddot{\bm{\theta}}_{ki} &= \frac{\sum_{i\in\gO_k}\vx_{ki}\times\vf_i}{\sum_{i\in\gO_k}\|\vx_{ki}\|^2}, \forall i\in\gO_k, \quad\text{(for sticks)} \\
    \ddot{\bm{\theta}}_{ki} &= \frac{\vx_{ki}\times(\vf_i-\ddot{\vq}_k)}{\|\vx_{ki}\|^2}, \forall i\in\gO_k. \quad\text{(for hinges)}    
\end{align}
  \STATE Update the positions and velocities as follows:
  \begin{align}
  \dot{\vq}_k^{l} &= \psi(\sum_{i\in\gO_k}h_i^{l-1})\dot{\vq}_k^{l-1}+\ddot{\vq}_k^{l}, \\
  \vq_k^{l} &= \vq_k^{l-1}+\dot{\vq}_k^{l}, \\
  \dot{\bm{\theta}}_{ki}^{l} &= \psi'(\sum_{i\in\gO_k}h_i^{l-1})\dot{\bm{\theta}}_{ki}^{l-1}+\ddot{\bm{\theta}}_{ki}^{l}, \forall i\in\gO_k.
  \end{align}
  \STATE Perform the forward kinematics for each particle in $\gO_k$:
  \begin{align}
    \vx_{i}^{l} &= \vq_k^{l} + \text{rot}(\dot{\bm{\theta}}_{ki}^{l})\vx_{ki}^{l-1}, \forall i\in\gO_k, \\ 
    \vv_{i}^{l} &= \dot{\vq}_{k}^{l} + \dot{\bm{\theta}}_{ki}^{l}\times\vx_{ki}^{l}, \forall i\in\gO_k.
  \end{align}
  \ENDFOR
  \ENDFOR
  \STATE {\bfseries Output:} The predicted states of all particles $\{\mS_i^{L}\}_{i=1}^N$.
\end{algorithmic}
\end{algorithm}

\section{More experimental details and results}
\label{sec.detailed_exp}

\textbf{Hyper-parameters and baselines.} For GNN, RF, EGNN, EGNNReg, and GMN, we empirically find that the following hyper-parameters generally work well, and use them across all experimental evaluations: batch size 200, Adam optimizer with learning rate 0.0005, hidden dim 64, and weight decay $1\times 10^{-10}$. All models are evaluated with four layers. SE(3)-Transformer and TFN do not perform well on our datasets, potentially due to the challenge of highly complex and constrained systems. Consequently, we tune the hyper-parameters and adopt the following configuration: batch size 100, learning rate 0.001, hidden dim 64, representation degrees 3 and weight decay $1\times 10^{-8}$. Models are trained for 600 epochs on the simulation dataset, and 500 epochs on the real-world datasets. EGNNReg is a variant of EGNN that explicitly adds the constraint error into its training loss by a regularization factor of $\lambda$. In our experiments, we also treat $\lambda$ as a hyper-parameter, and choose $\lambda$ that yields the best performance within the range [0.01, 0.1]. 

\textbf{Detailed experimental setup on MD17.} We randomly split the dataset into train/validation/test sets containing 500/2000/2000 frame pairs respectively. We choose $T=5000$ as the span between the input and prediction frames, and the difference in positions as the input velocity. The hyper-parameters of all models are kept the same as the synthetic dataset. We masked out the hydrogen atoms, focusing on the prediction of large atoms. We further augment the original molecular graph with 2-hop neighbors, and concatenate the hop index with atom numbers of the connected atoms as well as the edge type indicator as the edge feature, similar to~\cite{shi2021learning}. We use the norm of velocity concatenated with the atom number as the node feature. We randomly select bonds without commonly-connected atoms as sticks, and the rest of atoms as isolated particles. Although by this means not all the bond lengths are preserved, our experiment does illustrate that this is a simple but effective strategy of applying GMN on complex systems like molecules.

\textbf{Detailed experimental setup on CMU Motion Capture.} We first split the data into train/val/test sets containing 11/6/6 trials respectively. We then sample from the trials to get 200/600/600 frame pairs with $T=30$. We use the norm of velocity as node feature. We also augment the edges with 2-hop neighbors, and use the edge type indicator as edge feature. As for GMN, we sample 6 key bones of human body (as specified in Appendix~\ref{sec:kinedecompose}) as sticks, resulting in a system with 19 isolated particles and 6 sticks. Empirically, we select the edges connecting nodes (0, 11), (2, 3), (7, 8), (12, 13), (17, 18), and (24, 25) as sticks on the motion capture dataset, which indeed are the key parts of human body like the arms and legs.


\begin{figure}[htbp]
    \centering
    \includegraphics[width=0.32\textwidth]{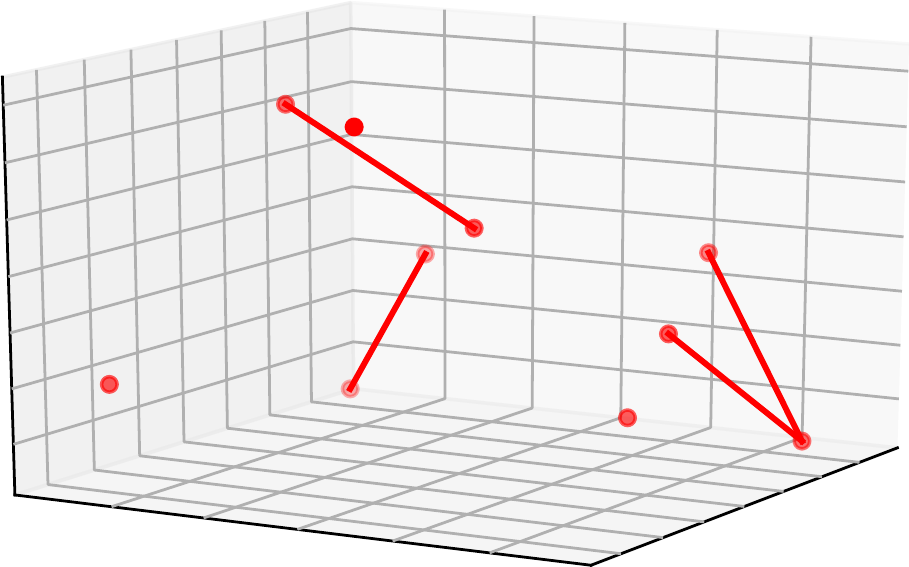}
    \includegraphics[width=0.32\textwidth]{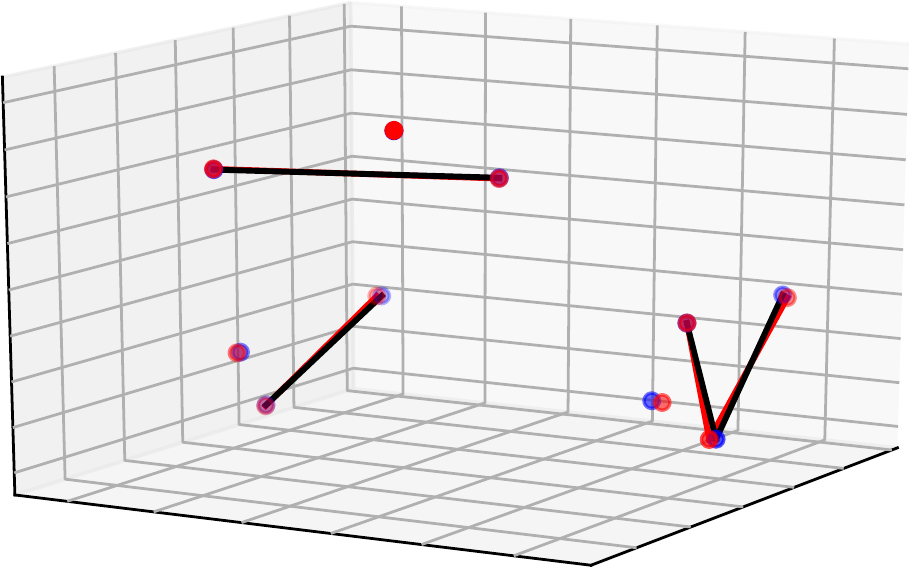}
    \includegraphics[width=0.32\textwidth]{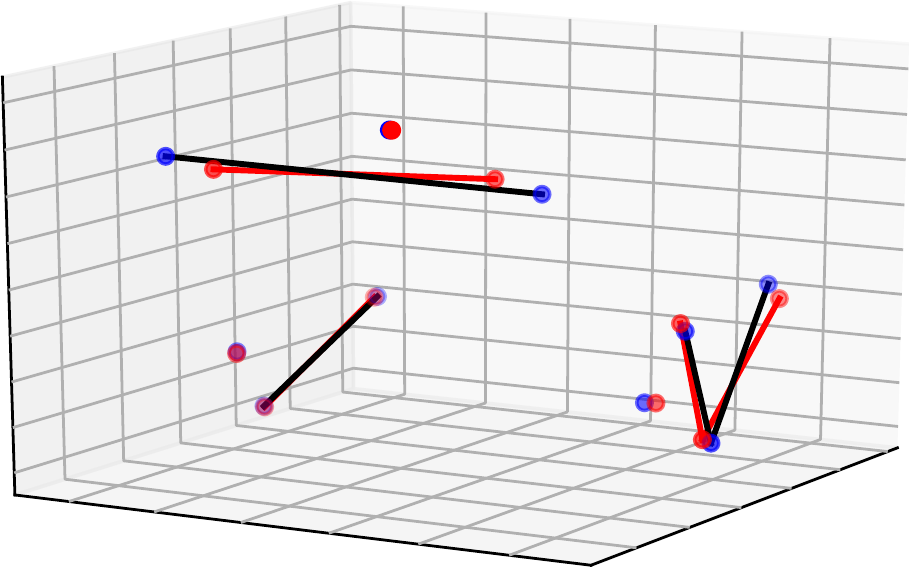}
    
    \includegraphics[width=0.32\textwidth]{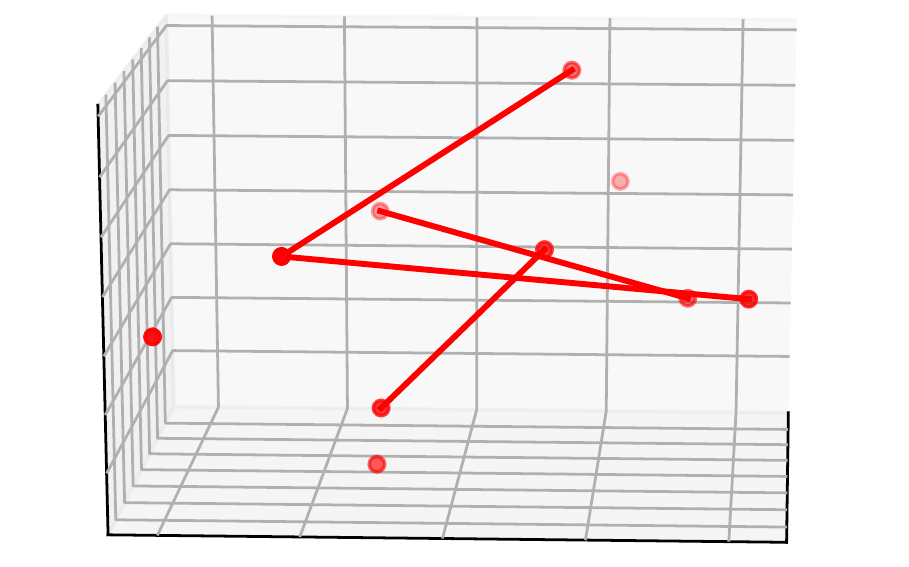}
    \includegraphics[width=0.32\textwidth]{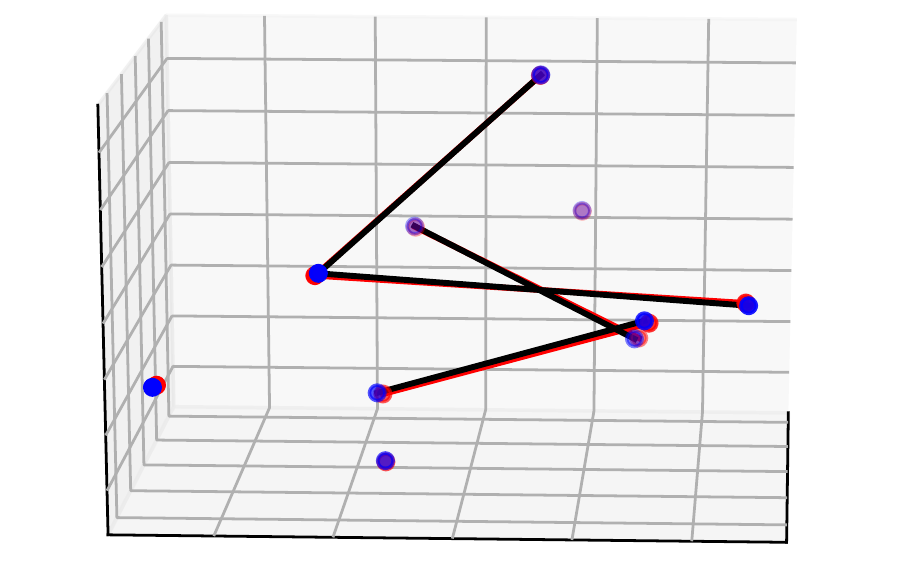}
    \includegraphics[width=0.32\textwidth]{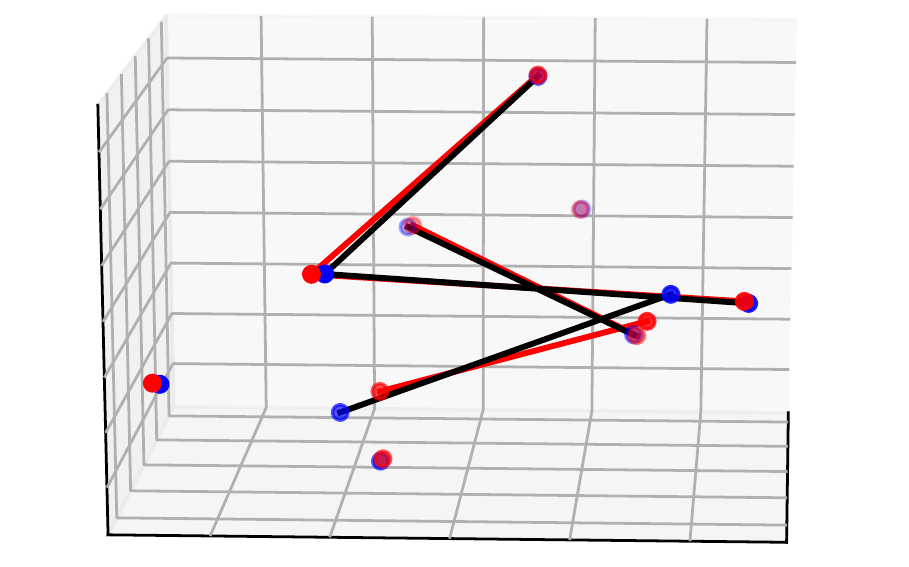}
    
    \includegraphics[width=0.32\textwidth]{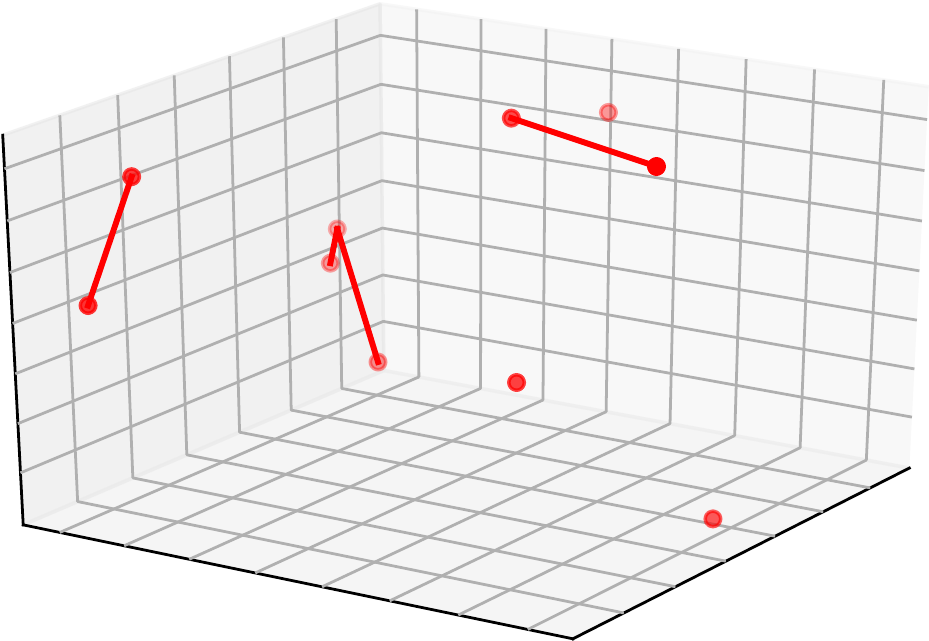}
    \includegraphics[width=0.32\textwidth]{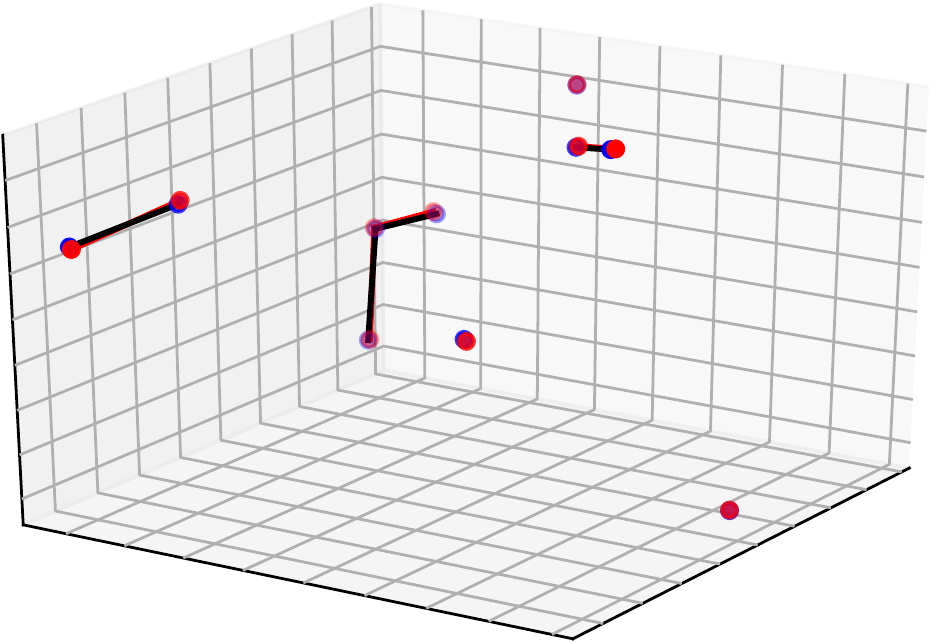}
    \includegraphics[width=0.32\textwidth]{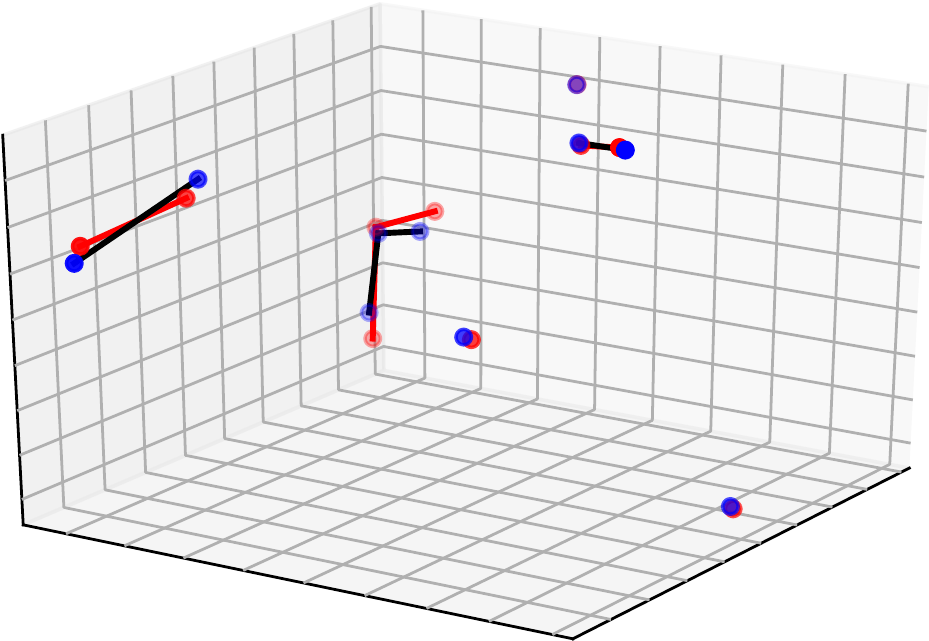}
    \caption{Left: initial position(s). Middle: the prediction(s) of GMN (in {\color{blue} blue}). Right: the prediction(s) of EGNN (in {\color{blue} blue}). Ground truth is marked in {\color{red} red}. Better viewed by zooming in.}
    \label{fig.visualization1}
\end{figure}

\begin{figure}[htbp]
    \centering
    \includegraphics[width=0.32\textwidth]{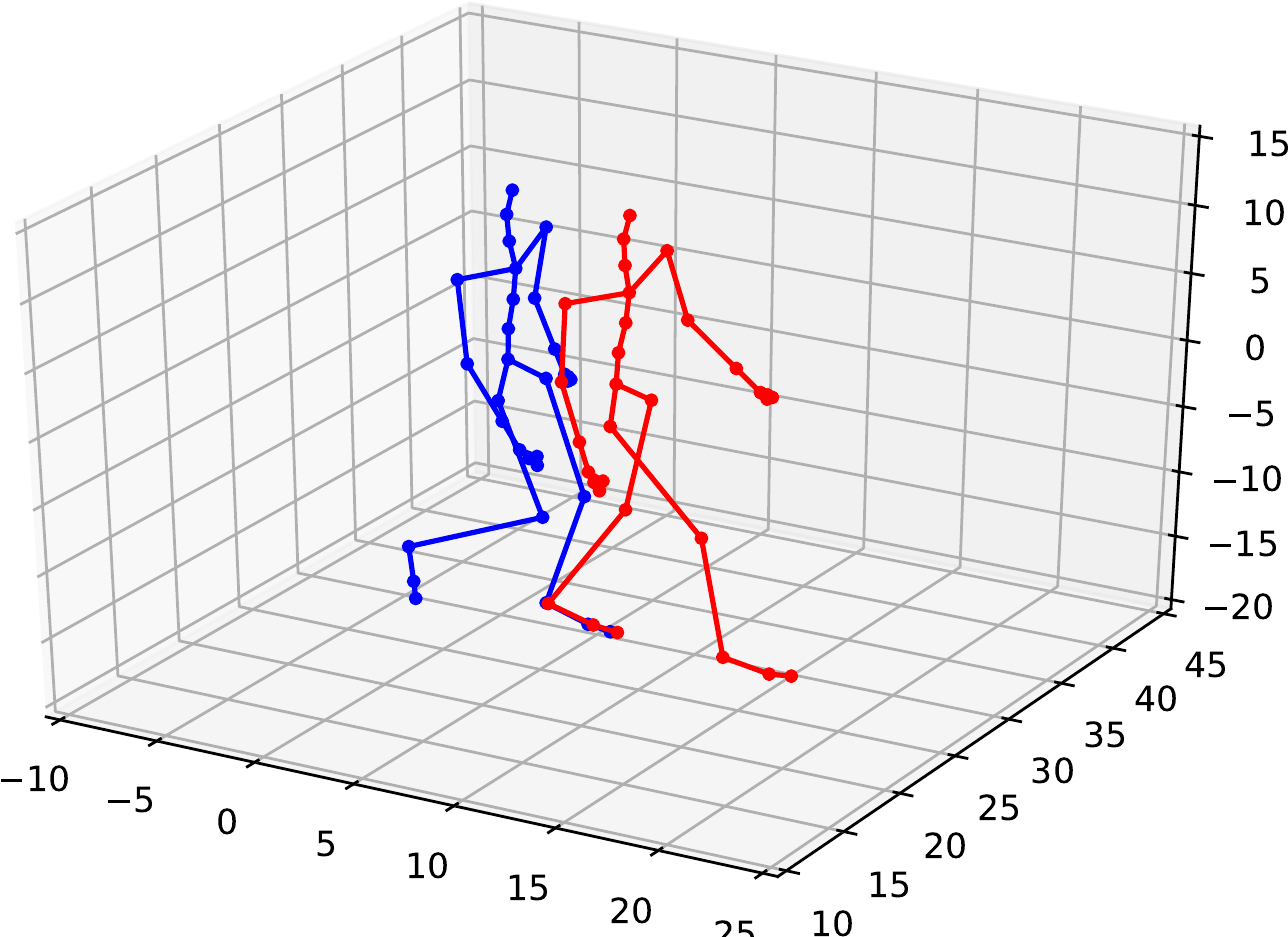}
    \includegraphics[width=0.32\textwidth]{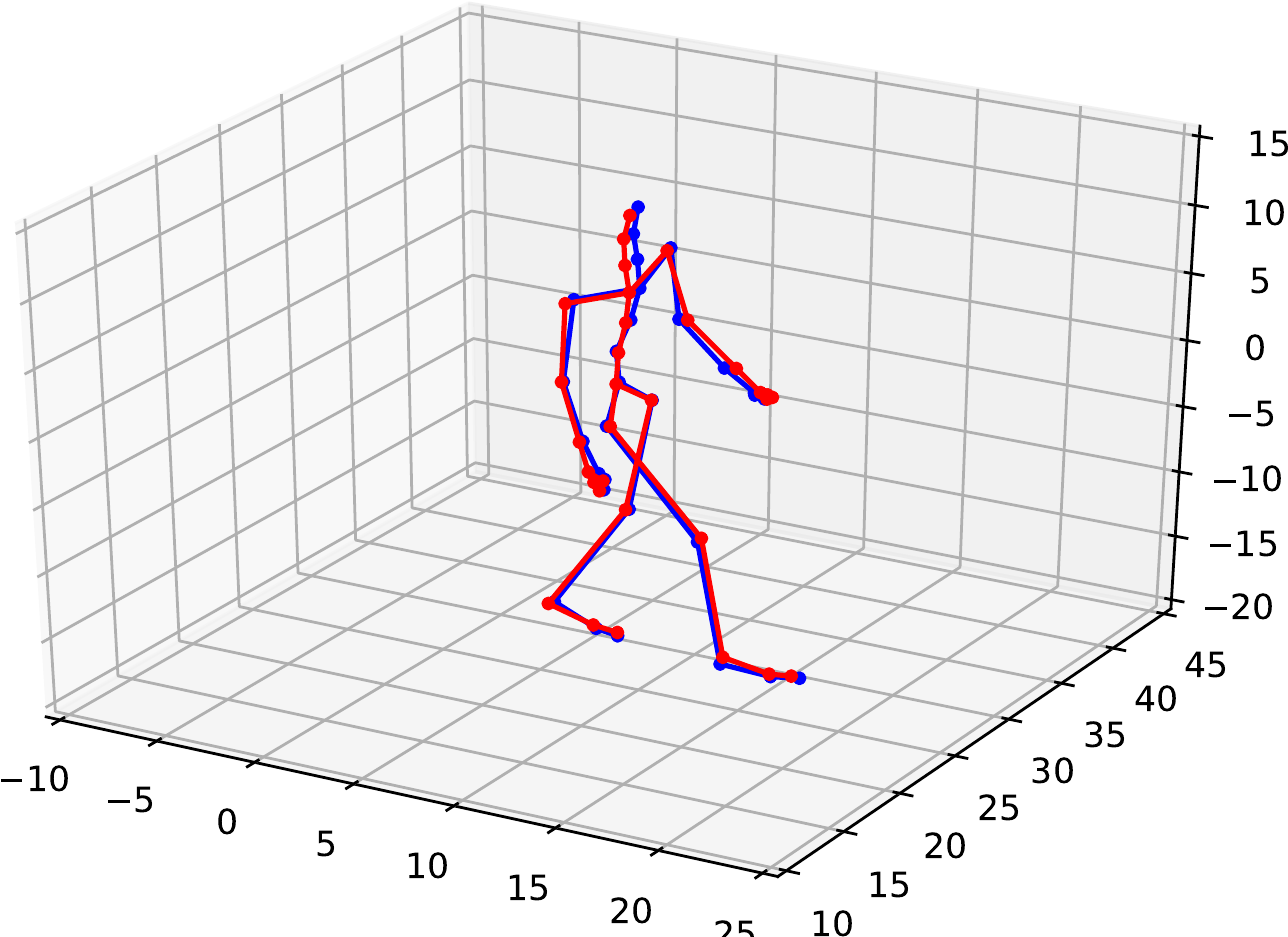}
    \includegraphics[width=0.32\textwidth]{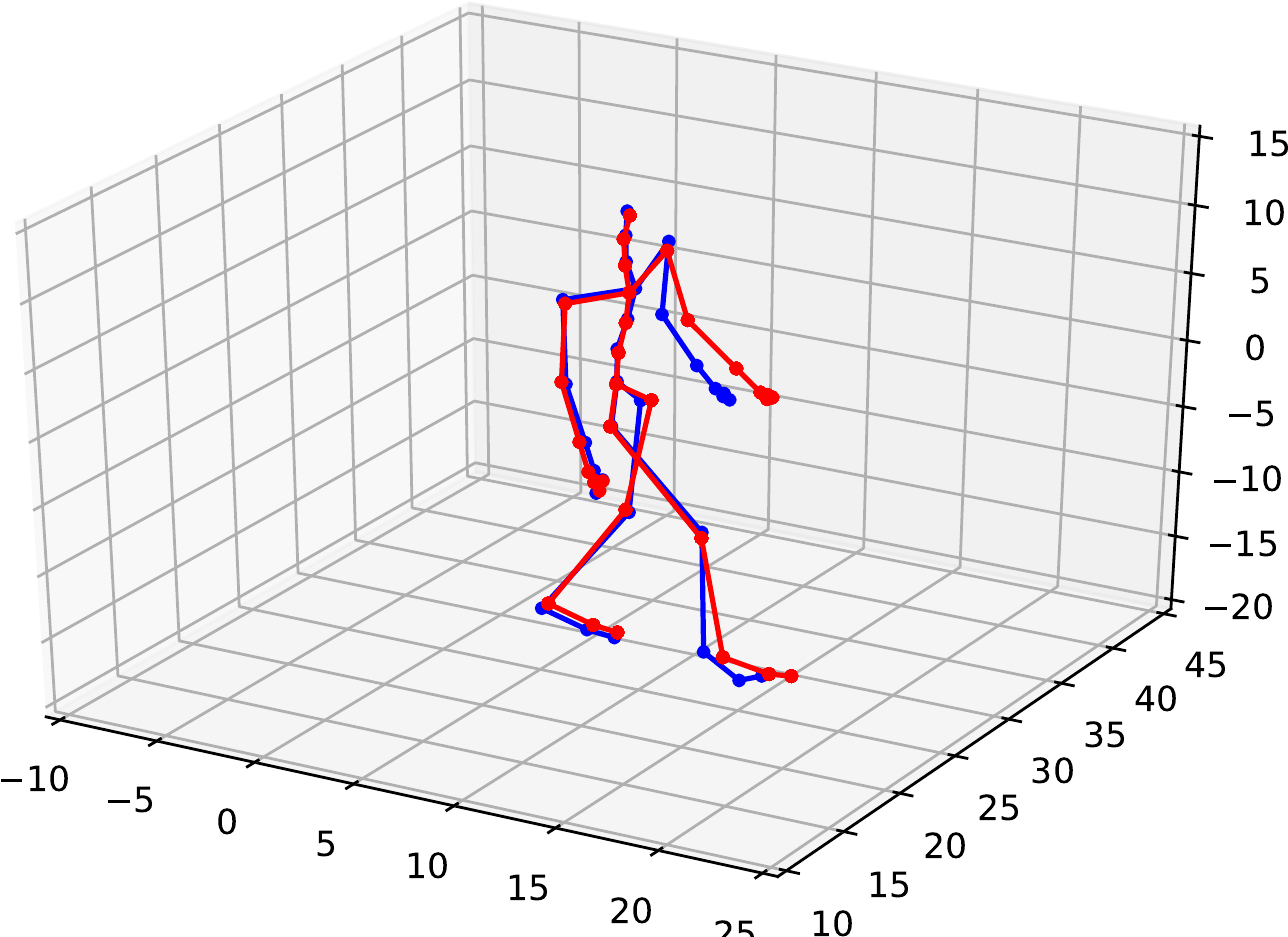}
    \caption{Left: initial position (in {\color{blue} blue}). Middle: the prediction of GMN (in {\color{blue} blue}). Right: the prediction of EGNN (in {\color{blue} blue}). Ground truths are marked in {\color{red} red}. Better viewed by zooming in.}
    \label{fig.visualization2}
\end{figure}

\textbf{More visualizations.}
We visualize the prediction outcomes by GMN and EGNN in Fig.~\ref{fig.visualization1} on the simulation dataset. GMN is found to be able to track the ground-truth trajectories accurately, whereas EGNN yields clear position errors and particularly breaks the constraints for sticks and hinges. These results are consistent with the performance in Table~\ref{tab.SOTA}. Fig.~\ref{fig.visualization2} provides extra visualization on Motion Capture. Similarly, GMN yields more accurate prediction than EGNN.



\textbf{More experimental results.}
In Table~\ref{tab:full_SOTA}, we provide a comprehensive performance comparison of different models under more scenarios with 500 training samples. Clearly, GMN outperforms other models on all object combinations involved. Besides, we provide the learning curve for GMN, EGNN, and EGNNReg on the simulation dataset (3,2,1) in Fig.~\ref{fig:curve}. GMN yields lower training loss and testing loss than EGNN and EGNNReg, benefiting from its constraint modeling. 
\begin{table}[htbp]
  \centering
  \caption{Prediction error ($\times 10^{-2}$) on various types of systems. The first column ``$p,s,h$" denotes the scenario with p isolated particles, s sticks and h hinges. Models are trained with 500 samples.}
    \begin{tabular}{lcccccccc}
    \toprule
          & \multicolumn{1}{c}{GMN} & \multicolumn{1}{c}{EGNN} & \multicolumn{1}{c}{EGNNReg} & \multicolumn{1}{c}{GNN} & \multicolumn{1}{c}{TFN} & \multicolumn{1}{c}{SE(3)-Tr.} & \multicolumn{1}{c}{RF} & \multicolumn{1}{c}{Linear} \\
    \midrule
    1,2,0 & 1.84  & 2.81  & 2.94  & 5.33  & 11.54 & 5.54  & 3.50  & 8.23 \\
    2,0,1 & 2.02  & 2.27  & 2.66  & 5.01  & 9.87  & 5.14  & 3.07  & 7.55 \\
    \midrule
    2,4,0 & 2.34  & 3.59  & 3.87  & 8.05  & 11.30 & 9.22  & 5.37  & 10.10 \\
    0,5,0 & 2.54  & 4.13  & 4.29  & 8.63  & 11.92 & 9.83  & 5.94  & 10.48 \\
    7,0,1 & 2.39  & 2.66  & 3.41  & 7.05  & 10.67 & 8.38  & 4.66  & 9.71 \\
    1,0,3 & 3.21  & 4.56  & 5.14  & 8.32  & 11.62 & 9.57  & 5.91  & 9.90 \\
    3,2,1 & 2.48  & 4.67  & 7.01  & 7.58  & 11.66 & 8.95  & 5.25  & 9.76 \\
    \midrule
    4,8,0 & 3.69  & 4.79  & 7.09  & 9.65  & 12.05 & 11.21 & 7.59  & 11.45 \\
    0,10,0 & 2.92  & 4.75  & 5.03  & 9.83  & 13.43 & 11.42 & 7.59  & 11.36 \\
    8,0,4 & 3.37  & 4.17  & 5.32  & 9.49  & 11.72 & 11.12 & 7.51  & 11.44 \\
    2,0,6 & 4.06  & 5.06  & 5.58  & 10.13 & 12.13 & 11.74 & 8.15  & 11.61 \\
    5,3,3 & 4.08  & 4.59  & 6.31  & 9.77  & 12.23 & 11.59 & 7.73  & 11.62 \\
    \bottomrule
    \end{tabular}%
  \label{tab:full_SOTA}%
\end{table}%

\begin{figure}
    \centering
    \includegraphics[width=0.4\textwidth]{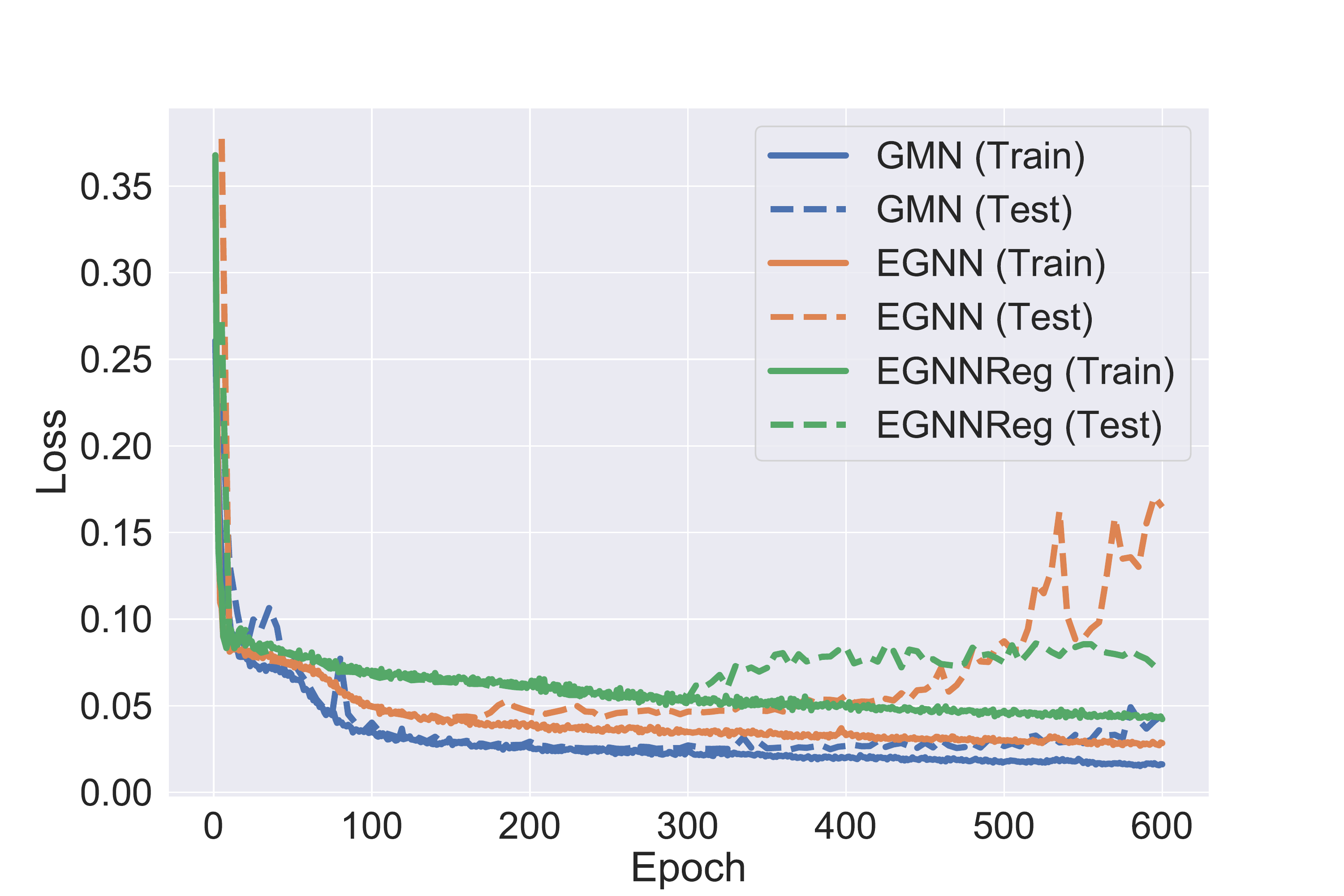}
    \caption{Learning curves on (3,2,1) with 500 training samples.}
    \label{fig:curve}
\end{figure}

\section{More Ablations}

\textbf{Hinge treated as two sticks.} 
We treat 0-1, 0-2 as two sticks, and apply the stick FK respectively. Afterwards, we translate the two sticks such that the 0s coincide at the midpoint of their predicted positions. From Table~\ref{tab:hinge2stick}, we find that this strategy performs worse than the hinge-modeled GMN, since splitting the hinge into two sticks overlooks the kinematics at the connected point. Yet and still, it performs better than EGNN, again verifying the benefit of our proposed constraint modeling.


\begin{table}[htbp]
  \centering
  \caption{Ablation on hinge FK.}
    \begin{tabular}{lccc|ccc}
    \toprule
          & \multicolumn{3}{c|}{$|$Train$|$ = 500} & \multicolumn{3}{c}{$|$Train$|$ = 1500} \\
          & 3,2,1 & 2,0,6 & 5,3,3 & 3,2,1 & 2,0,6 & 5,3,3 \\
    \midrule
    EGNN  & 4.67  & 5.06  & 4.59  & 2.54  & 3.50  & 3.42  \\
    EGNNReg & 7.01  & 5.58  & 6.31  & 2.62  & 3.61  & 3.07  \\
    GMN (Stick only) & 3.02  & 4.32  & 4.21  & 2.37  & 3.30  & 2.88  \\
    GMN   & 2.48  & 4.06  & 4.08  & 2.10  & 3.22  & 2.86  \\
    \bottomrule
    \end{tabular}%
  \label{tab:hinge2stick}%
\end{table}%

\textbf{Charges as node or edge feature.} 
In the experiment we by default assign $e_{ij} = c_ic_j$ as the edge feature (denoted as ``Edge + C''). Here we instead concatenate $c_i$ to the node feature of particle $i$, and set all $e_{ij} = 0$ (denoted as ``Node + C''). In Table~\ref{tab:charge} we observe that these alternatives on charges make very limited difference on performance, and indeed the models can learn the interaction of charges from node features, which is truly the case as depicted in Eq.~\ref{equ.egnnupd0} and~\ref{equ.interactionforce}.

\begin{table}[htbp]
  \centering
  \caption{Comparison of charge-assigning strategies. }
    \begin{tabular}{lccccc|ccccc}
    \toprule
          & \multicolumn{5}{c|}{Node + C}         & \multicolumn{5}{c}{Edge + C} \\
          & 1,2,0 & 2,0,1 & 3,2,1 & 0,10,0 & 5,3,3 & 1,2,0 & 2,0,1 & 3,2,1 & 0,10,0 & 5,3,3 \\
    \midrule
    EGNN  & 2.89  & 2.28  & 4.25  & 4.80  & 4.50  & 2.81  & 2.27  & 4.67  & 4.75  & 4.59  \\
    EGNNReg & 3.17  & 2.74  & 8.20  & 5.01  & 6.64  & 2.94  & 2.66  & 7.01  & 5.03  & 6.31  \\
    GMN   & 1.89  & 2.01  & 2.63  & 3.07  & 4.02  & 1.84  & 2.02  & 2.48  & 2.92  & 4.08  \\
    \bottomrule
    \end{tabular}%
  \label{tab:charge}%
\end{table}%

\section{Constraint Satisfaction}

\textbf{Constraint error.} The constraint error is computed as the total change in the lengths of sticks and hinges between the input and output, averaged per trajectory. Specifically, for hinges, the two edges are both considered. 

\textbf{Results.}
One vital feature of GMN is that it maintains the geometrical constraints exactly and inherently. To show this, Table~\ref{tab.constraint} records the corresponding constraint errors of several typical models. The results do verify our claim that GMN always outputs near-zero errors (all below 1e-4). Although EGNNReg that augments EGNN with regulation helps in reducing the constraint errors, it is data-driven and limited by the number of training samples; further, the constraints are pursued softly, making it defective for the applications where hard constraints are indispensable.

\begin{table}[htbp]
  \centering
  \caption{Constraint error on various types of systems.}
    \begin{tabular}{lccccc|ccccc}
    \toprule
          & \multicolumn{5}{c|}{$|$Train$|$ = 500}    & \multicolumn{5}{c}{$|$Train$|$ = 1500} \\
          & 1,2,0 & 2,0,1 & 3,2,1 & 0,10,0 & 5,3,3 & 1,2,0 & 2,0,1 & 3,2,1 & 0,10,0 & 5,3,3 \\
    \midrule
    GNN   & 0.200 & 0.386 & 0.492 & 0.154 & 0.468 & 0.225 & 0.426 & 0.779 & 0.251 & 0.772 \\
    EGNN  & 0.220 & 0.370 & 0.714 & 0.248 & 0.760 & 0.217 & 0.317 & 0.521 & 0.139 & 0.596 \\
    EGNNReg & 0.172 & 0.146 & 0.232 & 0.198 & 0.241 & 0.159 & 0.053 & 0.091 & 0.097 & 0.075 \\
    \midrule
    GMN   & \textbf{0.000} & \textbf{0.000} & \textbf{0.000} & \textbf{0.000} & \textbf{0.000} & \textbf{0.000} & \textbf{0.000} & \textbf{0.000} & \textbf{0.000} & \textbf{0.000} \\
    \bottomrule
    \end{tabular}%
  \label{tab.constraint}%
  \vskip -0.2in
\end{table}%

\section{More discussions on generalization}

In Table~\ref{tab.generalization} we compare the generalization capability of GMN with other methods. Here, we denote GMN trained on (3,2,1) and tested across different scenarios as GMN-Transfer, and GMN trained and tested on the same dataset as GMN-Original. It is observed from Table~\ref{tab:generalgmn} that GMN has strong generalization capability, since the transfer performance is very close to the original setting.

\begin{table}[htbp]
  \centering
  \caption{Comparison of GMN in the transfer and original settings.}
    \begin{tabular}{lcccc|cccc}
    \toprule
          & \multicolumn{4}{c|}{$|$Train$|$ = 500} & \multicolumn{4}{c}{$|$Train$|$ = 1500} \\
          & \multicolumn{1}{c}{3,2,1} & \multicolumn{1}{c}{2,4,0} & \multicolumn{1}{c}{1,0,3} & \multicolumn{1}{c|}{Average} & \multicolumn{1}{c}{3,2,1} & \multicolumn{1}{c}{2,4,0} & \multicolumn{1}{c}{1,0,3} & \multicolumn{1}{c}{Average} \\
    \midrule
    GMN-Transfer & 2.48  & 2.53  & 3.28  & 2.76  & 2.10   & 2.18  & 2.65  & 2.31 \\
    GMN-Original & 2.48  & 2.34  & 3.21  & 2.68  & 2.10   & 2.01  & 2.44  & 2.18 \\
    \bottomrule
    \end{tabular}%
  \label{tab:generalgmn}%
\end{table}%

\section{Learnable FK}
\label{sec.learnablefk}

It is indeed instrumental to discuss whether a learnable black-box function, which requires less domain knowledge, could also yield competitive performance, and if our hand-crafted FK still shows advantage over the learnable counterpart. To answer these questions, we replace the hand-crafted part (Eq. (\ref{equ.gendynupd1}-\ref{eq.fk}) as well as the Euler angle computations in Sec.~\ref{sec.model}) with the following equations:
$\vv_i^l = \phi(h_i^{l-1})\vv_i^{l-1} + \rho (\ddot{\vq}_k^l, \vx_{ki}^{l-1}, \vf_i^l),\vx_i^l = \vx_i^{l-1} + \vv_i^l$, 
where $\rho$ is the equivariant message passing layer we propose in Sec.~\ref{sec.emp}. By this design, the parameterized FK preserves its equivariant property (and the theoretical universality), and compared to EGNN, it additionally leverages the information from the object-level generalized coordinates $\ddot{\vq}_k^l$. We denote this variant of GMN as GMN-L. Moreover, since the parameterized FK inevitably loses the constraint-preserving property compared with the exact FK, therefore we also augment it with explicit constraint regularization, akin to what we did to EGNNReg. We hence denote this variant as GMN-LReg.

We evaluate the performance of GMN-L, GMN-LReg and compare them with GMN with exact FK as well as EGNN and EGNNReg in Table~\ref{tab:GMN-L}.
We interestingly find that GMN-L consistently outperforms EGNN in various settings (as well as the regularized version), which again verifies both the validity of our proposed equivariant message passing layer and the efficacy of leveraging object-level message (\emph{i.e.}, $\ddot{\vq}_k^l$) for the inference of FK. 
At the same time, GMN-L and GMN-LReg yield a minor gap with GMN, showing the evidence that hard-coding the constraints replaces a nontrivial amount of learning complexity. 

\begin{table}[htbp]
  \centering
  \caption{Comparison with GMN-L and GMN-LReg.}
    \begin{tabular}{lccccc|ccccc}
    \toprule
          & \multicolumn{5}{c|}{$|$Train$|$ = 500}      & \multicolumn{5}{c}{$|$Train$|$ = 1500} \\
          & 1,2,0 & 2,0,1 & 3,2,1 & 0,10,0 & 5,3,3 & 1,2,0 & 2,0,1 & 3,2,1 & 0,10,0 & 5,3,3 \\
    \midrule
    EGNN  & 2.81  & 2.27  & 4.67  & 4.75  & 4.59  & 2.59  & 1.86  & 2.54  & 2.79  & 3.25  \\
    EGNNReg & 2.94  & 2.66  & 7.01  & 5.03  & 6.31  & 2.74  & 1.58  & 2.62  & 3.03  & 3.07  \\
    \midrule
    GMN-L & 2.32  & 2.09  & 3.19  & 3.88  & 4.34  & 1.93  & 1.56  & 2.28  & 2.72  & 3.03  \\
    GMN-LReg & 2.52  & 2.23  & 3.34  & 3.67  & 4.31  & 1.91  & 1.88  & 2.49  & 2.61  & 3.00  \\
    \midrule
    GMN   & 1.84  & 2.02  & 2.48  & 2.92  & 4.08  & 1.68  & 1.47  & 2.10  & 2.32  & 2.86  \\
    \bottomrule
    \end{tabular}%
  \label{tab:GMN-L}%
\end{table}%

\section{Robustness to errors in the physical prior of constraints}

It is interesting to test the robustness of our model w.r.t. the noisy constraints.  This scenario would sometimes arise in real-world scenarios where we might not be certain about the exact connectivity of the rigid body, and thus would involve slight errors in domain expertise. Since our paper focuses on the distance constraint other than the angle constraint, the following investigations will only involve noise into the stick connectivity. We design three random perturbation operations on the input rigid body prior: \textbf{1.} (Join) randomly selecting 2 isolated particles and joining them as if there is a stick connecting; \textbf{2.} (Split) randomly selecting an existing stick and splitting it as two isolated particles;  \textbf{3.} (Join + Split) conducting operation 1 and 2 at the same time; and \textbf{4.} (Change in Length) randomly adding Gaussian noise $\gN(0, 0.1L)$ to the length of a stick, where $L$ is its original length. Note that the operation is conducted independently for every training sample each time it is fed into the network. 

We summarize the results in Table~\ref{tab:noisy}.
We observe that these perturbations, although somehow hinder the performance, in general do not jeopardize the performance too much (difference in MSE $\leq$ 0.30), indicating that GMN is not sensitive to slight errors of the input physical prior of the constraints and it is still able to learn to some degree of given the wrong constraints.

\vskip -0.2in
\begin{table}[htbp]
  \centering
  \caption{Robustness test in various scenarios.}
    \begin{tabular}{cccc|ccc}
    \toprule
          & \multicolumn{3}{c|}{$|$Train$|$ = 500} & \multicolumn{3}{c}{$|$Train$|$ = 1500} \\
          & 3,2,1 & 5,3,3 & 8,6,0 & 3,2,1 & 5,3,3 & 8,6,0 \\
    \midrule
    GMN   & 2.48  & 4.08  & 2.84  & 2.10  & 2.86  & 2.22  \\
    GMN w/ Join & 2.59  & 4.27  & 2.98  & 2.22  & 3.16  & 2.37  \\
    GMN w/ Split & 2.57  & 4.11  & 2.95  & 2.16  & 3.13  & 2.26  \\
    GMN w/ Join + Split & 2.63  & 4.16  & 3.01  & 2.31  & 3.01  & 2.34  \\
    GMN w/ Change in Length & 2.75  & 4.36  & 3.11  & 2.15  & 3.15  & 2.41  \\
    \bottomrule
    \end{tabular}%
  \label{tab:noisy}%
\end{table}%

\section{More discussions on the decomposition}
It is possible to decompose into bigger objects rather than just sticks. As a comparison, we further adopt the hinge-wise decomposition (\emph{i.e.}, decompose the system into particles and hinges), and investigate the performance on both MD17 and Motion Capture. We denote this model as GMN-LH, where H stands for hinges. The results are depicted in Table~\ref{tab:gmnlh} and Table~\ref{tab:gmnlh_mc}. On MD17, GMN-LH yields a little bit worse performance than GMN-L on several molecules, while giving desirable results on Ethanol and Benzene. On Motion Capture, GMN-LH outperforms GMN-L by a small gap, while is still worse than GMN. 
By default, we still encourage to perform the decomposition via sticks as sticks are actually the basic building blocks of hinges and other larger rigid objects.

\begin{table}[htbp]
  \centering
    \setlength{\tabcolsep}{2pt}
  \caption{Prediction error ($\times 10^{-2}$) on MD17 dataset. Results averaged across 3 runs.}
    \begin{tabular}{lcccccccc}
    \toprule
          & Aspirin & Benzene & Ethanol & Malonaldehyde & Naphthalene & Salicylic & Toluene & Uracil \\
    \midrule
    EGNN  & 14.41\tiny{$\pm$0.15}  & 62.40\tiny{$\pm$0.53}  & {4.64}\tiny{$\pm$0.01}  & 13.64\tiny{$\pm$0.01}  & 0.47\tiny{$\pm$0.02}  & 1.02\tiny{$\pm$0.02}  & 11.78\tiny{$\pm$0.07}  & {0.64}\tiny{$\pm$0.01}  \\
    GMN   & {10.14}\tiny{$\pm$0.03}  & \textbf{48.12}\tiny{$\pm$0.40}  & 4.83\tiny{$\pm$0.01}  & {13.11}\tiny{$\pm$0.03}  & \textbf{0.40}\tiny{$\pm$0.01}  & {0.91}\tiny{$\pm$0.01}  & \textbf{10.22}\tiny{$\pm$0.08}  & \textbf{0.59}\tiny{$\pm$0.01}  \\
    
        GMN-L   & \textbf{9.76}\tiny{$\pm$0.11}  & {54.17}\tiny{$\pm$0.69}  & {4.63}\tiny{$\pm$0.01}  & \textbf{12.82}\tiny{$\pm$0.03}  & {0.41}\tiny{$\pm$0.01}  & \textbf{0.88}\tiny{$\pm$0.01}  & {10.45}\tiny{$\pm$0.04}  & \textbf{0.59}\tiny{$\pm$0.01}  \\
                GMN-LH   & 10.25\tiny{$\pm$0.06}  & {52.02}\tiny{$\pm$0.97}  & \textbf{4.62}\tiny{$\pm$0.01}  & {12.83}\tiny{$\pm$0.03}  & {0.41}\tiny{$\pm$0.01}  & {1.03}\tiny{$\pm$0.01}  & {10.81}\tiny{$\pm$0.14}  & \textbf{0.59}\tiny{$\pm$0.01}  \\
    \bottomrule
    \end{tabular}%

  \label{tab:gmnlh}%
\end{table}%

\begin{table}[htbp]
  \centering
  \caption{Prediction error ($\times 10^{-2}$) on motion capture. Results averaged across 3 runs.}
    \begin{tabular}{cccc}
    \toprule
       EGNN   & GMN & GMN-L & GMN-LH \\
    \midrule
     59.1\tiny{$\pm$2.1} & \textbf{43.9}\tiny{$\pm$1.1} & 50.9\tiny{$\pm$0.7} &  \underline{48.7}\tiny{$\pm$1.1}\\
    \bottomrule
    \end{tabular}%
  \label{tab:gmnlh_mc}%
\end{table}%

\end{document}